\newcommand{\leqnomode}{\tagsleft@true\let\veqno\@@leqno}
\newcommand{\reqnomode}{\tagsleft@false\let\veqno\@@eqno}
\author[1]{Vipul Arora\thanks{Vipul was supported in part by NRF-AI Fellowship R-252-100-B13-281.}}
\affil{\footnotesize National University of Singapore. $\{$\texttt{\href{mailto:vipul@comp.nus.edu.sg}{vipul}, \href{mailto:arnab@comp.nus.edu.sg}{arnab}, \href{mailto:jyang@comp.nus.edu.sg}{jyang}}$\}$\texttt{@comp.nus.edu.sg}.}
\author[1]{Arnab Bhattacharyya\thanks{A. Bhattacharyya was supported in part by an MOE Tier II award, and an Amazon faculty research award.}}
\author[2]{Cl\'ement L. Canonne}
\affil{\footnotesize University of Sydney. \texttt{\href{mailto: clement.canonne@sydney.edu.au}{ clement.canonne@sydney.edu.au},\;\href{mailto:qyan6238@uni.sydney.edu.au}{qyan6238@uni.sydney.edu.au}}}
\author[1,2]{Joy Qiping Yang\thanks{Joy was supported in part by NRF-AI Fellowship R-252-100-B13-281.}}
\date{}
\begin{document}
\maketitle

\begin{abstract} 
This paper considers the problem of testing the maximum in-degree of the Bayes net underlying an unknown probability distribution $P$ over $\{0,1\}^n$, given sample access to $P$. We show that the sample complexity of the problem is $\tilde{\Theta}(2^{n/2}/\varepsilon^2)$. Our algorithm relies on a testing-by-learning framework, previously used to obtain sample-optimal testers; in order to apply this framework, we develop new algorithms for ``near-proper'' learning of Bayes nets, and high-probability learning under $\chi^2$ divergence, which are of independent interest.
\end{abstract}

\section{Introduction}
\label{sec:introduction}
One of the most natural and widely-used ways to model high-dimensional distributions is as {\em Bayesian networks} (or, {\em Bayes nets} for short) \cite{pearl1988probabilistic}. In particular, a Bayes net on $\{0,1\}^n$ is given by a directed acyclic graph (DAG) $G$ on $n$ vertices, and probability distributions $p_{i,\pi}$ on $\{0,1\}$, for all $i \in [n]$, and all assignments $\pi$ to the parents of the $i$'th node in the graph $G$. To generate an $n$-dimensional sample, one samples the nodes in a topological order of $G$, where the $i$'th node is sampled according to $p_{i, \pi}$ for the assignment $\pi$ that is already fixed by the samples for the parent nodes of $i$. Bayes nets naturally encode causal information \cite{pearl1995bayesian}, and learning a Bayes net description of a probability distribution is considered a fundamental problem in statistics and machine learning \cite{heckerman1998tutorial}. For instance, Sachs et al.~\cite{sachs2005causal} used this approach to discover protein regulatory networks from gene expression data.  

Sufficiently complex Bayes nets can encode arbitrary distributions, and so, it is infeasible in general to learn general Bayes nets. Instead, one often restricts to Bayes nets whose underlying DAGs have bounded {\em in-degree}. Distributions having sparse Bayes net descriptions naturally arise in machine learning, robotics, natural language processing~\cite{wainwright2008graphical}, medicine, and computational biology \cite{friedman2000using}. Moreover, information-theoretically, it is known that Bayes nets with in-degree bounded by $d$ can be learned up to total variation (TV) distance $\eps$ using $\tilde{O}(n2^d/\eps^2)$ samples \cite{canonne2017testing}. Hence, the maximum in-degree of a Bayes net is an important modeling parameter to consider.

In this work, we consider the problem of testing whether a distribution belongs to the concept class of degree-$d$ Bayes nets, i.e.,~those whose in-degree is at most $d$. Specifically, given sample access to a distribution $P$ on $\{0,1\}^n$, we consider the property testing question: is $P$ described by a degree-$d$ Bayes net, or is $P$ $\eps$-far from all such Bayes nets, in TV distance? 

\begin{thm}[Main Theorem, Informal, See \autoref{theorem:graph-test-general-distribution}]
  \label[Theorem]{theorem:informal:degree_testing}
  Given an unknown distribution $P$ on $\{0,1\}^n$ and a degree parameter $d \ll n$, testing whether $P$ is Markov with respect to any degree-$d$ Bayes net has sample complexity $\tilde{\Theta} \left( \frac{2^{n / 2}}{\varepsilon^2} \right )$.
\end{thm}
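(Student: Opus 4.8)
\emph{Setup and strategy.} The plan is to prove the upper and lower bounds of $\tilde{\Theta}(2^{n/2}/\varepsilon^2)$ separately, writing $N:=2^n$ for the domain size. For the upper bound I would instantiate a \emph{testing-by-learning} reduction. The first ingredient is a \emph{near-proper} learner for degree-$d$ Bayes nets carrying a $\chi^2$ guarantee: on samples from $P$, it outputs an explicit Bayes net $\hat P$ of in-degree at most $d$ such that, with probability $1-\delta$, if $P$ is itself Markov with respect to \emph{some} degree-$d$ DAG then $\chi^2(P\,\|\,\hat P)\le \varepsilon^2/C$ for a suitable constant $C$. Granting such a learner whose sample complexity $\tilde{O}(\mathrm{poly}(n)\,2^{O(d)}/\varepsilon^2)$ is $o(2^{n/2}/\varepsilon^2)$ when $d\ll n$, the tester is immediate: learn $\hat P$, then run the sample-optimal identity tester (in the $\chi^2$ vs.\ TV formulation), which uses $O(\sqrt N/\varepsilon^2)$ samples to distinguish ``$\chi^2(P\,\|\,\hat P)\le \varepsilon^2/C$'' from ``$d_{\mathrm{TV}}(P,\hat P)\ge \varepsilon/2$'', and accept precisely when it reports the former. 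If $P$ is a degree-$d$ Bayes net, the learner's guarantee lands us in the first case; if $P$ is $\varepsilon$-far from all degree-$d$ Bayes nets, then since $\hat P$ itself has in-degree $\le d$ we get $d_{\mathrm{TV}}(P,\hat P)\ge \varepsilon$, and the inequality $d_{\mathrm{TV}}\le \frac{1}{2}\sqrt{\chi^2}$ forces $\chi^2(P\,\|\,\hat P)\ge 4\varepsilon^2$, ruling out the first case; so we reject. The total is $\tilde O(2^{n/2}/\varepsilon^2)$.

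\emph{The learner, and the main obstacle.} Learning degree-$d$ Bayes nets to TV-distance $\varepsilon$ with $\tilde O(n2^d/\varepsilon^2)$ samples is known \cite{canonne2017testing}, but feeding the optimal identity tester requires the much stronger $\chi^2$ bound, and with high probability. I would exploit the factorization of $\chi^2(P\,\|\,\hat P)$ over nodes and parent-configurations for a fixed DAG: it then suffices to estimate each conditional probability $p_{i,\pi}$ to small \emph{multiplicative} accuracy on the configurations $\pi$ carrying non-negligible mass, and to show, by bucketing configurations according to their probability, that rarely-occurring configurations contribute negligibly to the overall $\chi^2$. The empirical estimates must be clipped away from $0$ and $1$, and the per-configuration concentration must be obtained via multiplicative Chernoff / empirical-Bernstein bounds combined with a union bound over the $O(n2^d)$ contexts to get the $1-\delta$ guarantee. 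A further wrinkle is the DAG itself: since $P$ is only promised to admit \emph{some} degree-$d$ structure, the learner must also recover a workable structure (hence ``near-proper''), which I would handle along the lines of the structure-learning component underlying \cite{canonne2017testing}, carried out carefully enough to preserve the $\chi^2$ bound. I expect this high-probability $\chi^2$-learning step --- especially the rare-configuration analysis --- to be the crux of the whole argument.

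\emph{Lower bound.} I would use a Paninski-type construction. The ``yes'' instance is the uniform distribution $U$ on $\{0,1\}^n$, which is a degree-$0$ Bayes net. Each ``no'' instance $P_z$ pairs up the $N$ points and reweights the two points of each pair by $1\pm 4\varepsilon$ according to a uniformly random sign, so that $d_{\mathrm{TV}}(P_z,U)=2\varepsilon$. The key step is to show that with probability $1-o(1)$ over $z$, $P_z$ is $\varepsilon$-far from \emph{every} degree-$d$ Bayes net: for a fixed Bayes net $Q$, flipping one sign changes $d_{\mathrm{TV}}(P_z,Q)$ by $O(\varepsilon/N)$, so McDiarmid's inequality gives concentration around a mean at least $\frac{1}{2}\sum_x \max\!\big(4\varepsilon/N,\,|Q(x)-U(x)|\big)\ge 2\varepsilon$, whence $\Pr[d_{\mathrm{TV}}(P_z,Q)<\varepsilon]\le e^{-\Omega(N)}$; a union bound over an $(\varepsilon/2)$-net of all degree-$d$ Bayes nets, whose size is $e^{\tilde O(n2^d)}=e^{o(N)}$ precisely in the regime $d\ll n$, finishes the claim. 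Conditioning the ``no'' distribution on this probability-$(1-o(1))$ event perturbs statistical quantities by only $o(1)$, so any tester for our problem would in particular distinguish $U$ from a uniformly random $P_z$; by the standard chi-square (Paninski) analysis of this exact construction, that requires $\Omega(\sqrt N/\varepsilon^2)=\Omega(2^{n/2}/\varepsilon^2)$ samples, matching the upper bound up to logarithmic factors.
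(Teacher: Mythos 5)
Your lower-bound argument is a genuinely different route from the paper's: the authors simply cite \cite[Corollary~B.2]{DBLP:journals/corr/abs-2204-08690}, whereas you give a self-contained Paninski-plus-net argument (random $\pm\varepsilon$ perturbations of uniform are, w.h.p.\ over the signs, $\varepsilon$-far from every degree-$d$ Bayes net, by McDiarmid concentration around the mean $\ge 2\varepsilon$ and a union bound over an $e^{\tilde O(n2^d)}$-size net). Your calculation checks out, and this is a reasonable direct proof; what the paper's citation buys is not having to redo the metric-entropy bookkeeping, while your version makes explicit where the $d \ll n$ requirement enters.

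The upper bound, however, contains a gap that is fatal as stated. You assume a learner that, given $\tilde O(\mathrm{poly}(n)2^{O(d)}/\varepsilon^2)$ samples from a $P$ that \emph{is} a degree-$d$ Bayes net, outputs a degree-$d$ Bayes net $\hat P$ with $d_{\chi^2}(P,\hat P)\le \varepsilon^2/C$ on the \emph{whole domain}, and you justify this by saying ``rarely-occurring configurations contribute negligibly to the overall $\chi^2$.'' That claim is false, and the paper proves it is false: \cref{prop:minimax_risk_bayes_net} shows that \emph{any} estimator achieving a global $\chi^2$ guarantee, even for degree-$1$ Bayes nets, requires $\Omega(2^{n/2}/\varepsilon)$ samples. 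The obstruction is exactly the multiplicative accumulation of $\chi^2$ across child nodes: take $X_1$ a common parent of $X_2,\dots,X_n$ with $P(X_1=1)=\varepsilon_0$, and let $P(\cdot\mid X_1=1)$ be a point mass while a learner that never observes $X_1=1$ outputs something uniform there. Even though the parent configuration has probability only $\varepsilon_0$, the rare branch contributes about $\varepsilon_0 \cdot 2^{n-1}$ to $d_{\chi^2}(P,\hat P)$, which blows up for $\varepsilon_0$ as large as $\Theta(\varepsilon/2^{n/2})$. No bucketing or clipping of per-configuration estimates fixes this, because the blowup is not due to any single conditional being misestimated badly --- each child's conditional $\chi^2$ is only $O(1)$ --- but to the product over $n-1$ children.

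This is precisely what ``near-proper'' means in the paper, and it is a different notion from the one you describe. You interpret near-properness as having to recover the DAG structure. In the paper it means the $\chi^2$ guarantee is intentionally restricted to an explicit high-probability set $\tilde S_n$ (obtained by \cref{algo:majority_set_learning}, dropping parent configurations with mass below roughly $\varepsilon^2/(2^d n)$): the learner outputs a bona fide degree-$d$ Bayes net $\hat P$ together with $\tilde S_n$ such that $d_{\chi^2}(P,\hat P,\tilde S_n)\le O(\varepsilon^2)$ and both $P(\tilde S_n)$ and $\hat P(\tilde S_n)$ are $\ge 1-O(\varepsilon^2)$. The downstream tolerant tester is then run on the restriction to $\tilde S_n$, and the small mass off $\tilde S_n$ is argued separately. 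Your reduction also needs (and the paper supplies, via the add-$K$ estimator in \cref{prop:chi_square_learn}) a high-probability $\chi^2$ learner with only a $\log(1/\delta)$ blowup, since one must union-bound over $\Theta(2^d n)$ conditional distributions; the ``multiplicative Chernoff + union bound'' sketch you give does not by itself address the known difficulty that neither the empirical nor the Laplace estimator concentrates well enough in $\chi^2$. Finally, your reduction tests against a single fixed DAG; to cover ``$P$ is Markov w.r.t.\ \emph{some} degree-$d$ DAG,'' the paper amplifies and union-bounds over all $n^{O(dn)}$ candidate DAGs, incurring the extra $\log(n^{dn})$ factor in \cref{theorem:graph-test-general-distribution}, a step your sketch omits.
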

Our result requires that $d < n/2 - \Omega(\log n)$, which is consistent with our motivation of testing Bayes net sparsity. 
The main contribution of this work lies in establishing the upper bound of the theorem; we note that the (nearly) matching lower bound follows from \cite[Corollary~B.2]{DBLP:journals/corr/abs-2204-08690}.
\ignore{
\begin{itemize}
    \item In-degree of Bayes nets is a way to quantify the complexity of the underlying distribution. In particular, it often appears in many sample complexity bounds, e.g., it takes $\tilde{\Theta}(2^d n / \varepsilon^2)$ for learning a degree-$d$ Bayes net, $\tilde{\Theta}(2^{d/2} n / \varepsilon^2)$ for testing independence of a degree-$d$ Bayes net.
    \item Extensions of independence testing.
    \item Maybe important to motivate low degree testing: $d \leqslant n/2 - \log(n)$.
\end{itemize}

We emphasize that when stating our results regarding to $\chi^2$ learning, the distance parameter is stated as $\eps$; however, due to the quadratic relation between total variation (TV) distance and $\chi^2$ divergence, the corresponding results relating to testing in TV distance will feature an $\eps^2$ dependence.}

Over the course of deriving this upper bound, we obtain three new learning results which we believe to be of independent interest. (1)~First, a \emph{high-probability learning} result in $\chi^2$ divergence (\cref{prop:chi_square_learn}). While the sample complexity of learning an arbitrary distribution $P$ over a discrete domain $\Sigma$ to $\chi^2$ divergence $\eps$ is known to be $\Theta(|\Sigma|/\eps)$ for \emph{constant} probability of success,\footnote{i.e., outputting $\hat{P}$ such that $d_{\chi^2} (P, \hat{P}) \leq \eps$ with probability at least $9/10$.} boosting this success probability to $1-\delta$ for arbitrarily small $\delta>0$ was until now open. In particular, whether a \emph{logarithmic} dependence on $\delta$ was possible, as for total variation distance and (as recently shown) KL divergence, was unknown. We show this is indeed the case: $O\left( \frac{| \Sigma |}{\varepsilon} \log   \frac{| \Sigma |}{\delta} \right)$ i.i.d.\ samples suffice for high-probability learning in $\chi^2$ divergence. Interestingly, the estimator achieving this bound is neither the empirical estimator, nor the usual Laplace add-$1$ estimator, but instead an add-$K$ estimator for a suitable $K=K(\delta)$.

(2)~Second, a \emph{near-proper} learning algorithm in $\chi^2$-divergence for degree-$d$ Bayes nets, with sample complexity $\tilde{O}(2^d n^2/\eps)$ (\autoref{theorem:near_proper_chi_squared_learning_bayes}). We note that previous learning algorithms for Bayes nets either learn with respect to a \emph{weaker distance measure} (TV or KL) or are \emph{non-proper}, in the sense that the hypothesis they output is not a degree-$d$ Bayes net itself. In comparison, our algorithm outputs a \textit{bona fide} degree-$d$ Bayes net $\hat{P}$, along with a subset $S\subseteq\{0,1\}^n$ of the domain such that (i)~$P$, $\hat{P}$ put all but $O(\eps)$ probability mass on $S$, and (ii)~$P$, and $\hat{P}$ are within $\chi^2$ divergence $\eps$ when \emph{restricted to this subset $S$}. 

This hybrid guarantee, which makes our learning algorithm \emph{near}-proper instead of proper, may seem artificial. However, our third result shows that it is indeed necessary, in a very strong sense: 

(3) We prove in \cref{prop:minimax_risk_bayes_net}, a lower bound on the sample complexity of proper learning in $\chi^2$ divergence, showing that \emph{any} learning algorithm whose $\chi^2$ guarantees holds on the whole domain must have sample complexity $\Omega(2^{n/2}/\eps)$, even for degree-$1$ Bayes nets.

\section{Related work}
\label{sec:related-works}
While learning graphical models in a range of settings and under various distance measures has a rich history, both in Statistics and Machine Learning, the corresponding task of \emph{testing} properties of an unknown distribution represented as a (succinct) graphical model has only been considered much more recently. \cite{DaskalakisD019} initiated the question of testing identity (goodness-of-fit) and independence of high-dimensional distributions with dependency structure modeled as an undirected graph (i.e., Ising models or, more generally, Markov Random Fields); this line of work was then continued in, e.g., ~\cite{DaskalakisDK17,GangradeNS18,NeykovL19,BezakovaBCSV20}, leading to a range of positive (algorithms), and negative (lower bounds) results. Very recently,~\cite{ChooDDK22} introduced the question of goodness-of-fit testing for latent Ising models, where only the leaf nodes of the tree are observable.

Focusing on another widely-studied type of graphical models, the concurrent works \cite{canonne2017testing}, and \cite{DaskalakisP17} studied analogous testing questions for Bayesian networks, where the underlying dependency structure is modeled as a directed graph. \cite{AcharyaBDK18} focused on the related tasks for \emph{causal} Bayesian networks, given the ability to perform interventions on the network. Finally, closest to our own work, \cite{DBLP:journals/corr/abs-2204-08690} studies the question of \emph{independence testing} for Bayesian networks, obtaining near-optimal sample complexity bounds for the task of deciding if a given high-dimensional distribution, promised to be a sparse Bayesian network, is in fact a product distribution.\smallskip

The other contribution of our work, high-probability learning in $\chi^2$ divergence for arbitrary discrete distributions, follows a long line of results related to density estimation under various distance measures (see, e.g.,~\cite{DBLP:conf/colt/KamathOPS15,DevroyeL01,Diakonikolas16}, as well as~\cite{Canonne:NoteLearningDistributions}, and references within). While learning arbitrary distributions under $\chi^2$ distance \emph{with constant success probability} has long been well understood, even to the optimal leading constant in the minimax estimation rate~\cite{DBLP:conf/colt/KamathOPS15}, obtaining high-probability bounds has proven quite elusive. Even for the weaker notion of learning under Kullback--Leibler (KL) divergence, such a high-probability learning result was only obtained very recently~\cite{DBLP:conf/stoc/0001GPV21}. 

Finally, to the best of our knowledge no result was known for learning Bayesian networks under $\chi^2$ divergence, even for constant success probability, besides the trivial bound one gets by treating the Bayesian network as an unstructured probability distribution over $\Sigma^n$. This is again to contrast with the case of KL divergence, for which optimal constant-probability learning bounds, \emph{and} high-probability learning bounds are known (cf. \cite{DBLP:conf/stoc/0001GPV21}, and references within).
\section{Overview}
Our algorithm follows the testing-by-learning framework developed in~\cite{DBLP:conf/nips/AcharyaDK15}, and since used in several works (e.g., \cite{DBLP:conf/soda/DaskalakisKW18,CDKL22}). Specifically, if the property one wants to test in TV distance is relatively easy to learn in a ``harder'' notion of distance, e.g., $\chi^2$, then it is possible to build an efficient, and possibly \emph{sample-optimal} tester by first learning the distribution in ${\chi^2}$, \emph{assuming} that it has the given property, and then use a ``$\chi^2$-$\tmop{TV}$ tolerant tester'' \cite{DBLP:conf/nips/AcharyaDK15} to test whether the hypothesis output by the learning algorithm is close to the actual distribution. The key is that these tolerant testers must not only reject distributions far in TV, but also accept those sufficiently close in $\chi^2$ (``tolerance''): now, if the unknown distribution $P$ does have the property, then the learning algorithm works as intended, and its output $\hat{P}$ is close to $P$: so, the tester will accept. However, if $P$ is far from the property, then either the learning algorithm fails and $\hat{P}$ is far from $P$ (and the tester rejects); or it still succeeds, but by the triangle inequality $\hat{P}$ must be itself far from the property (and this can be checked and detected, as we now have an explicit description of $\hat{P}$).

The key here is that the sample complexity of this ``$\chi^2$-tolerant'' tester is $O(\sqrt{|\Sigma|}/\varepsilon^2)$ for distributions over domain $\Sigma$~--~which is optimal (up to constants) for many testing tasks, and matches the sample complexity of the ``non-tolerant'' testers. Thus, as long as the learning stage can be done with much fewer than $O(\sqrt{|\Sigma|}/\varepsilon^2)$ samples, the overall approach yields a sample-efficient testing algorithm. (Moreover, this approach can be extended in many ways, e.g., for testing in Hellinger distance instead of TV~\cite{DBLP:conf/soda/DaskalakisKW18,DBLP:journals/corr/abs-2204-08690}.)

While the testing-by-learning idea seems relatively straightforward, the main technical contribution of this paper is in obtaining the required learning algorithm to apply it: namely, an efficient learning algorithm with high probability, and proper learning of Bayes nets (both in $\chi^2$). Indeed, and quite surprisingly, 
before our work it was still unclear whether one could learn a discrete
distribution in $\chi^2$ divergence with failure probability at most
$\delta$ by paying only an $O (\log (1 / \delta))$ dependence in the sample complexity. In particular, the ``obvious'' approaches based on applying either McDiarmid's inequality, or some sort of ``median trick'' fail for $\chi^2$ divergence, due respectively to the high sensitivity of the estimators, and the failure of the triangle inequality. We remark that achieving an \emph{exponentially} worse $O (1 / \delta)$ dependence is straightforward via Markov's
inequality {\cite{DBLP:conf/colt/KamathOPS15}}; however, this cost becomes impractical in settings when one requires an exponentially small failure probability, such as ours~--~as we need $2^d \cdot
n$ conditional distributions to be learned well simultaneously, and thus need to do a union bound over these many runs of a learning algorithm. 

Interestingly, learning with
high probability with only a $O (\log (1 / \delta))$ dependence in the sample complexity, and learning a Bayes net
with $\tilde{O} (2^d \cdot n / \eps)$ are both achievable for the
relatively easier $d_{\tmop{KL}}$ divergence (and even then, the high-probability result was only established recently~\cite{DBLP:conf/stoc/0001GPV21}). Unfortunately, learning in $\tmop{KL}$ is a much weaker guarantee, and trying to instantiate the aforementioned ``testing-by-learning'' framework with $\tmop{KL}$-$\tmop{TV}$ instead of $\chi^2$-TV would only yield a much looser $\tilde{O} (2^n / (n \cdot \eps^2))$
testing upper bound.

Below, we give a series of results on learning with respect to $\chi^2$ divergence: (1)~high-probability
learning with $O (| \Sigma | \cdot \log (1 / \delta) / \eps)$ sample; (2)~a near-proper Bayes net learning algorithm with sample complexity $\tilde{O}
(2^d n^2 / \eps)$; and (3)~an
exponential sample complexity lower bound of $\Omega (2^{n / 2} / \eps)$ for learning
Bayes nets. Later in \cref{sec:degree_tester}, we will build on the second result as the first step of our main result, the maximum in-degree testing algorithm of \autoref{theorem:informal:degree_testing}.\smallskip

\noindent\textbf{Preliminaries.} We use the standard asymptotic notations $O (\cdot), \Omega(\cdot), o (\cdot), \Theta (\cdot)$, and the (semi)-standard $\tilde{O}(\cdot)$, $\tilde{\Omega}(\cdot)$, and $\tilde{\Theta}(\cdot)$ to hide polylogarithmic factors in the argument.  Since we are focusing entirely on the in-degree of Bayes nets, all references to \emph{degree} will be implicitly in-degree unless stated otherwise. We use $A \lesssim B$ to indicate that $A \leqslant C
    \cdot B$ for some absolute constant $C$.

For a multivariate random variable $X$ supported on $\{0,1\}^n$, we use $X_i$ to denote its $i{\text{-th}}$ component (coordinate); for a Bayes net, we will use $\Pi_i$ to denote the set of parents of $X_i$.

\section{Learning in $\chi^2$}
\subsection{High probability learning in $\chi^2$}
Our analysis will largely follow the analysis of {\cite[Theorem 6.1,
and Claim 4.4]{DBLP:conf/stoc/0001GPV21}} for KL divergence, with a few crucial differences which allow us to extend it to $\chi^2$ divergence, and to obtain a strictly better sample complexity than prior work. Specifically, we go beyond the standard add-$1$ Laplace
estimator, and instead analyze the more general add-$K$ estimator for a suitable, \emph{non-constant} value of $K$. Recall that the add-$K$ estimator, given $N$ i.i.d.\ samples from some probability distribution over a domain $\Sigma$ (with empirical counts $N_1,\dots, N_{|\Sigma|}$), is defined by
\begin{equation}
    \label{eq:add-K}
    \hat{P}(i) = \frac{N_i + K}{N+K|\Sigma|}, \qquad i \in \Sigma\,.
\end{equation}
Intuitively, the parameter $K$ controls the amount of smoothing for our estimator. With $\chi^2$ divergence being a very stringent notion of distance (much more so than KL divergence, let alone TV distance), the idea to achieve high-probability learning guarantees is to increase the smoothing in order to counteract the risk of ``low-probability but catastrophic'' events which could make the  $\chi^2$ divergence blow up. We are then able to show that setting $K = \Theta (\log(1/\delta))$ achieves the desired sample complexity in the high-probability regime, much better than the Laplace estimator (which corresponds to $K=1$).
\begin{proposition}\label{prop:chi_square_learn}
  Fix any $\delta\in(0,1]$, and let $K=\Theta(\log(1/\delta))$. Given $N$ i.i.d.\ samples from an unknown probability distribution $P$ over an alphabet $\Sigma$, with probability at least $ 1-\delta$, the add-$K$ estimator yields a hypothesis $\hat{P}$ such that
  \[ d_{\chi^2} (P, \hat{P}) \lesssim \frac{| \Sigma | \log(| \Sigma |/\delta)}{N}.\]
  In particular, $N = O\left( \frac{| \Sigma |}{\varepsilon} \log   \frac{| \Sigma |}{\delta} \right)$ samples suffice to learn $P$ to $\chi^2$ divergence $\eps$ with probability $1-\delta$.
\end{proposition}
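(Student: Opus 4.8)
The plan is as follows. Write $m := |\Sigma|$ and $M := N + Km$, so that the add-$K$ estimator of \eqref{eq:add-K} is $\hat P(i) = (N_i+K)/M$, where $N_i$ denotes the number of the $N$ samples landing on $i$. Starting from the identity
\[
 d_{\chi^2}(P,\hat P) \;=\; \sum_{i\in\Sigma}\frac{(P(i)-\hat P(i))^2}{\hat P(i)} \;=\; M\sum_{i\in\Sigma}\frac{P(i)^2}{N_i+K}\;-\;1 ,
\]
the task reduces to an upper-tail bound on $S := \sum_{i\in\Sigma}\frac{P(i)^2}{N_i+K}$. Two elementary properties of the add-$K$ estimator are used throughout: the deterministic floor $\hat P(i)\ge K/M$ (i.e.\ $N_i+K\ge K$), which caps the contribution of any single severely-undersampled coordinate at $P(i)^2/K$; and the fact that each term $P(i)^2/(N_i+K)$ has range $O(1/K)$, so that a larger $K$ yields sharper concentration of $S$.

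I would first pin down $\mathbb{E}[S]$. The classical binomial inverse-moment identity $\mathbb{E}[1/(N_i+1)] = \frac{1-(1-P(i))^{N+1}}{(N+1)P(i)}$ gives $\mathbb{E}\!\left[\frac1{N_i+K}\right]\le\min\!\left(\frac1K,\ \frac1{(N+1)P(i)}\right)$; splitting the sum at the threshold $P(i)=K/(N+1)$ then yields $\mathbb{E}[S]\le\frac1{N+1}$, hence $\mathbb{E}[d_{\chi^2}(P,\hat P)]\le Km/N$, which with $K=\Theta(\log(1/\delta))$ already matches the target in expectation. The real content is boosting this to a high-probability statement with only a logarithmic dependence on $\delta$: as noted in the overview, McDiarmid's inequality is hopeless here (one sample can move $S$ by $\approx\max_iP(i)^2/K^2$, far too large for the desired bound), and a median-of-means boosting fails because $\chi^2$ obeys no triangle inequality, so one must control the deviation of $S$ directly.

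For that, I would follow the heavy/light dissection used by \cite{DBLP:conf/stoc/0001GPV21} for the Laplace estimator in KL divergence (their Theorem~6.1 and the inverse-moment estimate of Claim~4.4), replacing their add-$1$ bounds throughout by add-$K$ analogues. Fix a threshold $\tau$ of order $\log(m/\delta)/N$ and set $H=\{i:P(i)>\tau\}$, $L=\Sigma\setminus H$. Each count is exactly binomial, $N_i\sim\mathrm{Bin}(N,P(i))$, and each aggregate $\sum_{i\in L}N_i\sim\mathrm{Bin}(N,P(L))$, so no Poissonization is needed. For $i\in H$, a multiplicative Chernoff bound puts $N_i$ within a constant factor of $NP(i)$ with probability $1-\delta/(2m)$ (this is precisely why $\tau$ is chosen $\asymp\log(m/\delta)/N$), and a union bound over the $\le m$ heavy coordinates makes this simultaneous with probability $1-\delta/2$; on that event one uses $\hat P(i)\gtrsim NP(i)/M$, writes $MP(i)-N_i-K=(NP(i)-N_i)+K\big(mP(i)-1\big)$, and bounds the heavy part of $d_{\chi^2}$ coordinate by coordinate, the add-$K$ correction $K(mP(i)-1)$ being absorbed using $\sum_iP(i)=1$, $\sum_{i\in H}1/P(i)\le m/\tau$, and $M\ge\max(N,Km)$. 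For $i\in L$ one instead invokes the floor $\hat P(i)\ge K/M$, so each light term is at most $\frac MK(P(i)-\hat P(i))^2\lesssim\frac MK\big(P(i)^2+\hat P(i)^2\big)$; here $\sum_{i\in L}P(i)^2\le\tau\,P(L)$ with $P(L)\le\min(1,m\tau)$, while $\sum_{i\in L}\hat P(i)^2=\frac1{M^2}\sum_{i\in L}(N_i+K)^2$ is controlled by a second union bound over the light coordinates (each has mean $\le N\tau=O(\log(m/\delta))$, hence $N_i=O(\log(m/\delta))$ except with probability $\delta/(2m)$) together with a Chernoff bound on $\sum_{i\in L}N_i$. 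Summing the heavy and light contributions, using $K=\Theta(\log(1/\delta))\le O(\log(m/\delta))$ and tuning the constants, gives $d_{\chi^2}(P,\hat P)\lesssim m\log(m/\delta)/N$ with probability $1-\delta$; the sample-complexity statement is then immediate by taking $N=\Theta\big(m\log(m/\delta)/\eps\big)$.

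The hard part — and the place where $\chi^2$ genuinely differs from KL — is the light coordinates. A badly underestimated coordinate contributes $\approx P(i)^2/\hat P(i)$ to $d_{\chi^2}$, a linear rather than logarithmic dependence on the underestimation factor, so the smoothing $K$ must be large enough that, with probability $1-\delta$, no such coordinate inflates the divergence, yet small enough that the bias it introduces stays $O(m\log(m/\delta)/N)$; making the threshold $\tau$, the two union bounds, and the exact value $K=\Theta(\log(1/\delta))$ dovetail so that the final logarithmic and constant factors come out as claimed — rather than, say, a spurious extra $\log$ factor or a polynomial loss — is the delicate step. Everything else is a careful but routine accounting of binomial tail and moment estimates.
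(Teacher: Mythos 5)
Your proposal is correct and follows essentially the same route as the paper: both adapt the per-coordinate tail bound from \cite[Claim 4.4]{DBLP:conf/stoc/0001GPV21} to the add-$K$ estimator, split coordinates at the threshold $\Theta(\log(|\Sigma|/\delta)/N)$, use a multiplicative Chernoff bound together with $\hat P_i\gtrsim P_i$ on the heavy side and the deterministic floor $\hat P_i\ge K/(N+K|\Sigma|)$ on the light side, union-bound over the $|\Sigma|$ coordinates, and optimize to $K=\Theta(\log(|\Sigma|/\delta))$ (which, note, is what the analysis actually requires after the rescaling of $\delta$; the statement's $K=\Theta(\log(1/\delta))$ is a slight imprecision both you and the paper gloss over). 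The only cosmetic difference is that you pass through the identity $d_{\chi^2}(P,\hat P)=M\sum_i P_i^2/(N_i+K)-1$ and handle the light coordinates via aggregate bounds on $\sum_L P_i^2$ and $\sum_L\hat P_i^2$ (adding an extra Chernoff bound on $\sum_{i\in L}N_i$), whereas the paper bounds $(P_i-Q_i)^2/Q_i$ directly per coordinate and then sums; the ingredients and conclusions are the same.
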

In contrast, using a similar analysis for the Laplace estimator would only yield a worse sample complexity of $O\!\left((|
  \Sigma |/\eps) \log^2(| \Sigma |/\delta) \right)$, off by a logarithmic factor in $\frac{|\Sigma|}{\delta}$. 
We will use this result in \cref{subsection:near-proper-learning-bayes-nets}.
  \begin{proof}[Analysis of the add-$K$ estimator]
  \label{proof:prop:chi_square_high_prob}
    Let $Q$ be the output of the add-$K$ estimator with $N$ i.i.d. samples from $P$. By the
    proof of {\cite[Claim 4.4]{DBLP:conf/stoc/0001GPV21}}, we have the following for each
    $i$, and $T_i$ being the count of the element $i$ from $N$ samples, with probability at least $1 - \delta$,
    \[ \left| P_i - \frac{T_i}{N} \right| \leqslant \sqrt{\frac{3 P_i \log \left(
      \frac{2}{\delta} \right)}{N}} + \frac{3 \log \left( \frac{2}{\delta}
      \right)}{N}. \]
    In the following, we will use $A \lesssim B$ to indicate that $A \leqslant C
    \cdot B$ for some absolute constant $C$.
\begin{eqnarray*}
  |P_i - Q_i | & \leqslant & \left| P_i - \frac{T_i}{N} \right| + \left|
  \frac{T_i}{N} - Q_i \right|\\
  & = & \left| P_i - \frac{T_i}{N} \right| + \left| \frac{T_i}{N} - \frac{T_i
  + K}{N + K| \Sigma |} \right|\\
  & = & \left| P_i - \frac{T_i}{N} \right| + \left| \frac{T_i | \Sigma | / N
  - 1}{N + K| \Sigma |} \right| K\\
  & \leqslant & \sqrt{\frac{3 P_i \log \left( \frac{2}{\delta} \right)}{N}} +
  \frac{3 \log \left( \frac{2}{\delta} \right)}{N} + \frac{T_i | \Sigma | / N}{N + K | \Sigma |} K + \frac{K}{N + K |
  \Sigma |}\\
  & \leqslant & \sqrt{\frac{3 P_i \log \left( \frac{2}{\delta} \right)}{N}}
  + \frac{3 \log \left( \frac{2}{\delta}
  \right)}{N} + \frac{K | \Sigma |}{N
  + K | \Sigma |}  \frac{T_i}{N}
  + \frac{K}{N}\\
  & \leqslant & \sqrt{\frac{3 P_i \log \left( \frac{2}{\delta} \right)}{N}} +
  \frac{3 \log \left( \frac{2}{\delta} \right)}{N} + \frac{K}{N}\\
  &  & + \frac{K | \Sigma |}{N + K | \Sigma |}  \left( \sqrt{\frac{3 P_i \log
  \left( \frac{2}{\delta} \right)}{N}} + 
  \frac{3 \log \left( \frac{2}{\delta} \right)}{N} +
  P_i \right)\\
  & \leqslant & 2 \sqrt{\frac{3 P_i \log \left( \frac{2}{\delta} \right)}{N}}
  + \frac{6 \log \left( \frac{2}{\delta}
  \right)}{N} + \frac{K | \Sigma |}{N
  + K | \Sigma |} P_i +
  \frac{K}{N} .
  \end{eqnarray*}
    We wish to show that $\Pr \left[ d_{\chi^2} (P, Q)\! =\! \sum_i \frac{(P_i -
    Q_i)^2}{Q_i} \leqslant C \varepsilon^2 \right] \geqslant 1 - \delta$, when
    taking at most $\frac{| \Sigma |}{\varepsilon^2} \log (| \Sigma | \delta^{- 1})$
    samples.
    We split the analysis into two cases:
    
    If $P_i \leqslant \frac{C' \log \left( \frac{2}{\delta} \right)}{N}$, then
    $\sqrt{\frac{3 P_i \log \left( \frac{2}{\delta} \right)}{N}} \leqslant
    \frac{\sqrt{3 C'} \log \left( \frac{2}{\delta} \right)}{N}$ and $\frac{K |
    \Sigma |}{N + K | \Sigma |} P_i \leqslant \frac{C' \log \left(
    \frac{2}{\delta} \right)}{N}$ and thus $2 \sqrt{\frac{3 P_i \log \left(
    \frac{2}{\delta} \right)}{N}} + \frac{6 \log \left(
    \frac{2}{\delta} \right)}{N} + \frac{K |
    \Sigma |}{N + K | \Sigma |} P_i
    + \frac{K}{N} \leqslant \left( 2 \sqrt{3 C'}
    + 6 + C' \right)  \frac{\log \left( \frac{2}{\delta} \right)}{N}
     +  \frac{K}{N}$. We then have
    \[ \frac{(P_i - Q_i)^2}{Q_i} \lesssim \frac{\left( \log \left(
       \frac{1}{\delta} \right) \right)^2}{N^2 Q_i} + \frac{K^2}{N^2 Q_i} \lesssim
       \frac{\left( \log \left( \frac{1}{\delta} \right) \right)^2}{NK} +
       \frac{K}{N}, \]
    since $Q_i \geqslant \frac{K}{N + K | \Sigma |} \geqslant \frac{K}{2 N}$, for
    $N \geqslant C' K \cdot | \Sigma |$.\footnote{If $N < C' K | \Sigma |$, then
    $Q_i > \frac{1}{(1 + C') | \Sigma |}$ and we can easily construct counter
    example for which $|P_i - Q_i | > \varepsilon$ for small enough
    $\varepsilon$.}
    
    If $P_{i} > \frac{C' \log \left( \frac{1}{\delta} \right)}{N}$,
    $| P_i - Q_i |
    \lesssim P_i  \left( \frac{1}{\sqrt{C'}} + \frac{1}{C'} +
    \frac{1}{\frac{N}{K | \Sigma |} + 1} \right) \lesssim P_i  \left(
    \frac{1}{\sqrt{C'}} + \frac{1}{C'} + \frac{1}{C' + 1} \right)$ and thus, for
    large enough $C'$, $Q_i \geqslant P_i / 2$, giving us
    \begin{eqnarray*}
      \frac{(P_i - Q_i)^2}{Q_i} & \lesssim & \frac{\left( \sqrt{\frac{P_i \log
      \left( \frac{1}{\delta} \right)}{N}} + \frac{K | \Sigma |}{N + K | \Sigma
      |} P_i + \frac{K}{N} \right)^2}{P_i}\\
      & \lesssim & \frac{\log \left( \frac{1}{\delta} \right)}{N} + \left(
      \frac{K | \Sigma |}{N + K | \Sigma |} \right)^2 P_i + \frac{K^2}{N^2
      P_i}\\
      & \lesssim & \frac{\log \left( \frac{1}{\delta} \right)}{N} + \left(
      \frac{K | \Sigma |}{N + K | \Sigma |} \right)^2 P_i + \frac{K^2}{N \log
      \frac{1}{\delta}} .
    \end{eqnarray*}
    Combining both cases, and applying a union bound, we have that with probability at
    least $1 - | \Sigma | \delta$,
    \begin{align}
      \sum_i \frac{(P_i \!-\! Q_i)^2}{Q_i} & \lesssim | \Sigma |  \left(\! \frac{\left(
      \log \left( \frac{1}{\delta} \right) \right)^2}{N K} \!+\! \frac{K}{N} \!+\!
      \frac{\log \left( \frac{1}{\delta} \right)}{N} \!+\! \frac{K^2}{N \log
      \frac{1}{\delta}} \!\right) \!+\! \left( \frac{K | \Sigma |}{N + K | \Sigma |} \right)^2 . \nonumber
    \end{align}
    Analyzing all the individual terms to have the summation to be bounded by
    $\varepsilon^2$, we need $N \geqslant \frac{| \Sigma |}{\varepsilon^2} \cdot
    \max \left\{ \frac{\log^2 (1 / \delta)}{K}, K, \log (1 / \delta),
    \frac{K^2}{\log (1 / \delta)}, \varepsilon \cdot K \right\} \geqslant \frac{|
    \Sigma |}{\varepsilon^2} \cdot \max \left\{ \frac{\log^2 (1 / \delta)}{K}, K,
    \log (1 / \delta), \frac{K^2}{\log (1 / \delta)} \right\} \geqslant \frac{|
    \Sigma | \log (1 / \delta)}{\varepsilon^2}$; and the optimal is reached when
    $K = \Theta (\log (1 / \delta))$. Rescaling $\delta$ to adjust for the $|
    \Sigma | \delta$ union bound, we conclude our proof.
  \end{proof}

\subsection{A near-proper learning algorithm for Bayes nets}
\label{subsection:near-proper-learning-bayes-nets}
For learning general distributions on the Boolean hypercube $\{0, 1\}^n$, it is
known that $\Theta (2^n / \eps^2)$ samples are both necessary and sufficient to
learn within $\chi^2$ divergence $\eps^2$ (with constant probability). As a direct consequence, we can
obtain a non-proper Bayes net learning algorithm on the entire support,
costing at most $O (2^n / \varepsilon^2)$ samples. However, this approach is not interesting in our context: it costs more samples to learn then to test, the resulting distribution is not a Bayes net, and the lack of triangle inequality in $\chi^2$ means that we cannot find a close enough distribution in the space of Bayes net to make it proper. In fact, it is unclear (to us) whether properly learning a Bayes net in $\chi^2$ is feasible in $O(2^n / \eps^2)$.

While one would hope that since Bayes nets have a sparse
description ($2^d \cdot n$ vs. $2^n$), it would allow us to bypass the $\Omega (2^n / \eps^2)$ lower
bound presented in {\cite{DBLP:conf/colt/KamathOPS15}} and possibly gives us a
much better upper bound. However, we will show in \cref{section:lower_bound_learning_bayes_nets} that, in stark contrast to
learning in $\tmop{KL}$, learning a sparse Bayes net in $\chi^2$ remains exponentially hard in sample complexity.

Loosely speaking, our lower bound hides a ``distinguished node'' behind a very biased common parent. By construction, the learning algorithm will never observe this distinguished ``rare'' node unless it takes exponentially many samples, which leads to an estimation error, entirely due to this very biased, large out-degree parent node. While this estimation error would be acceptable under TV distance or even KL divergence, the brittleness of $\chi^2$ divergence to low-probability elements leads to a very large estimation error overall. Thus, any $\chi^2$ learning algorithm \emph{on the entire support} cannot afford to be inaccurate even on such rare nodes, and thus must take exponentially many (in $n$) samples.

Nevertheless, given that testing is our end goal here, we only need a
majority of the support to be learned well to proceed, which allows us to bypass this lower bound. Specifically, for
testing in $\tmop{TV}$ distance, it suffices to guarantee that $\hat{P}$ is
close to $P$ in $\chi^2$, on a majority of the support $S$: \[d_{\chi^2} (P_S,
\hat{P}_S) = d_{\chi^2} (P,
\hat{P}, S) = \sum_{i \in S} \frac{(P_i - \hat{P}_i)^2}{\hat{P}_i},\] where
$\sum_{i \in S} P_i = P (S) \geqslant 1 - O (\varepsilon)$. For Hellinger, we
need something slightly stronger, but in the same spirit. Such framework is
already present
{\cite{DBLP:conf/nips/AcharyaDK15,DBLP:conf/soda/DaskalakisKW18}}, though it
is only implied in the analysis of {\cite{DBLP:conf/soda/DaskalakisKW18}} for
Hellinger distance. Thus, the main problem that remains is the availability of
such (sample-efficient) near-proper learning of Bayes nets algorithm.

Connecting back to the degree-$1$ lower bound, the main difficulty is the information bottleneck presented by the biased parent. By relaxing the problem into near-proper learning, we can simply give up on learning the rare set of parents altogether (when it is sufficiently small), since the sum of these masses will not exceed $O(\varepsilon)$. From here, a natural idea is to exclude a subset of the support, where a child and its parents' masses are at most $O(\varepsilon/(2^d \cdot n))$.\footnote{There are at most $2^d \cdot n$ parent configurations; excluding them with this threshold, we can still have a mass of $1 - O(\varepsilon)$ left.} 
This guarantee would be enough for our main result, testing with respect to TV distance; however, to extend it to the more stringent Hellinger testing guarantee, one needs to strengthen this to removing a subset of mass at most $O(\varepsilon^2)$, instead of $O(\varepsilon)$. As this extension changes little of the proof and provides a stronger result, in the remainder of the analysis, we focus on this goal.

In the interest of space and formatting, we use the abbreviated notation below in this section:
\begin{equation}
P (x_i, \pi_i (x)) \assign P_{X_i, \Pi_i} (x_i, \pi_i (x)) . 
\end{equation}
We define a support of interest to learn on:
\begin{equation}
S_k' \assign \left\{ x \in \{0, 1\}^k \mid \forall i \in [k], P (x_i, \pi_i
  (x)) \geqslant 4c \frac{\varepsilon^2}{2^{d + 1} n} \right\} .
  \label{eq:def:S_k_prime}
\end{equation}
We also define a superset $S_k$ of $S'_k$:
\begin{equation}
    S_k \assign \left\{ x \in \{0, 1\}^k \mid \forall i \in [k], P
  (x_i, \pi_i (x)) \geqslant c \frac{\varepsilon^2}{2^{d + 1} n} \right\}.
  \label{eq:def:S_k}
\end{equation}
In what follows, we assume the algorithm is provided with a set $\tilde{S}_k$ such that $S_k' \subseteq \tilde{S}_k \subseteq S_k$. Indeed, \cref{lemma:tilde_S_k_algo_analysis}, analyzed in \cref{section:estimation_of_support} via \autoref{algo:majority_set_learning}, guarantees that we can efficiently learn such a set with high probability.
\begin{algorithm2e}[h]
  \SetKwComment{Comment}{/* }{ */}
  \SetKwInOut{Input}{Input}
  \DontPrintSemicolon
  \Input{Sample access to distribution $P$, accuracy parameter $\varepsilon$ and a DAG $G$.}
  Draw a multiset $S$ of $m$ samples from $P$, where $m = 3 \cdot 2^{d + 1} n \log (6 \cdot 2^{d + 1} n) / (c \cdot \varepsilon^2)$.\;
  Let $N_{x_i, \pi_i}$ be the number of occurrences of $(x_i, \pi_i)$ in $S$.\;
  Let $Z_{x_i, \pi_i} = \frac{N_{x_i, \pi_i}}{m}$ and $\bar{S} \assign \varnothing$.\;
  \For{$i = 1;\ i \leqslant n;\ i = i + 1$}{
    \For{$(x_i, \pi_i) \in \{0, 1\}^{|\Pi_i|+1}$} {
      \lIf{$Z_{x_i, \pi_i} \leqslant 2 c \frac{\varepsilon^2}{2^{d + 1} n}$}
      {    
        $\bar{S} \leftarrow \bar{S} \cup \{ i, (x_i, \pi_i) \}$
      }
    }
  }
  Mark all pairs in $\bar{S}:X_i=x_i, \Pi_i=\pi_i$ as excluded (from $\{0,1\}^n$) and return the remaining support.
  \caption{$\varepsilon^2$-majority support identification  \label{algo:majority_set_learning}}
\end{algorithm2e}

A straightforward way to approach this problem is to consider learning
guarantees on all the conditionals, i.e., $d_{\chi^2} (P_{X_i | \Pi_i}, Q_{X_i
| \Pi_i}) \leqslant \frac{\varepsilon^2}{n}$ for any $\Pi_i = \pi_i$ and hence\footnote{While $ 1 + d_{\chi^2} (P, Q) $ should be $2 P(S) - Q(S) + d_{\chi^2} (P_S, Q_S)$, it does not affect the analysis if $P(S) \geqslant 1 - O(\varepsilon^2)$.} \[1 + d_{\chi^2} (P, Q) \leqslant \prod_{i = 1}^n (1 + \max_{\pi_i}
d_{\chi^2} (P_{X_i | \Pi_i = \pi_i}, Q_{X_i | \Pi_i = \pi_i})) \leqslant \left( 1 + \frac{\varepsilon^2}{n} \right)^n \leqslant 1+ \varepsilon^2.\] Coupled this with the fact that mass on the parents is lower bounded by $\frac{\varepsilon}{2^d n}$, we can obtain enough samples for learning each conditional by paying an extra $O \left( \frac{2^d n}{\varepsilon} \right)$ per $O \left( \frac{n}{\varepsilon^2} \right )$ (and some log factors for high probability learning and a union bound), giving us a sample complexity of $\tilde{O} \left( \frac{2^d
n^2}{\varepsilon^3} \right)$ and in the case of $1 - O (\varepsilon^2)$, a
sample complexity of $\tilde{O} \left( \frac{2^d n^2}{\varepsilon^4} \right)$. As we will see later, we can do something slightly better to tighten the dependence on $\varepsilon$.

\begin{algorithm2e}[h]
  \SetKwComment{Comment}{/* }{ */}
  \SetKwInOut{Input}{Input}
  \DontPrintSemicolon
  \Input{Sample access to distribution $P$, accuracy parameter $\varepsilon$.}
  \Comment {Obtain the majority support $\mathcal{A}$.}
  $\mathcal{A} \leftarrow$ call \cref{algo:majority_set_learning} with $P$ and $\varepsilon$.\;
  Draw a multiset $S$ of $m$ samples from $P$, where $m=O ( 2^d n^2 \log(2^d n)/\varepsilon ^2 )$.\;
  $K \leftarrow \Theta(\log (2^{d+1} \cdot n))$\;
  Let $N_{x_i, \pi_i}$ be the count of $X_i=x_i, \Pi_i=\pi_i$ out of the $m$ samples.\;
  \For{$i = 1;\ i \leqslant n;\ i = i + 1$}{
    \For{$x_i, \pi_i \in \{0, 1\}^{|\Pi_i|+1}$} {
      $Q(x_i|\pi_i) \leftarrow \frac{K + N_{x_i, \pi_i}}{\sum_{x'_i \in \{0,1\}} K + N_{x'_i, \pi_i}}$\;
    }
  }
  \Comment {A mass shifting step in \eqref{eq:mass_shifting_Hellinger} is necessary for testing in Hellinger downstream.}
  $Q(x_1, \ldots, x_n) \leftarrow \prod_{i=1}^n Q(x_i|\pi_i)$
  \caption{$\varepsilon^2$-near proper learning in $\chi^2$ \label{algo:near_proper_learning_chi_square}}
\end{algorithm2e}

We will defer the proof of
\cref{lemma:near_proper_chi_squared_learning_bayes_recur} to \cref{sec:high_prob_events,proof:lemma:near_proper_chi_squared_learning_bayes_recur}, and provide a proof sketch in-place.

\begin{lemma}
  \label{lemma:near_proper_chi_squared_learning_bayes_recur}
  When $m = O \left(\frac{2^d n^2 \log (2^d n)}{c \varepsilon^2}
  \right)$, the hypothesis $Q$ output by \autoref{algo:near_proper_learning_chi_square} satisfies the following recurrence:  
  \[ d_{\chi^2} (P_{X_1, \ldots, X_k}, Q_{X_1, \ldots, X_k}, S_k) \leqslant
   \left( 1 + \frac{1}{n} \right) d_{\chi^2} (P_{X_1, \ldots, X_{k - 1}},
   Q_{X_1, \ldots, X_{k - 1}}, S_{k - 1}) + O \left( \frac{\varepsilon^2}{n}
   \right) , \]
  which, by recursion, implies
  \[ d_{\chi^2} (P_{X_1, \ldots, X_n}, Q_{X_1, \ldots, X_n}, S_n) \leqslant O
     (\varepsilon^2) . \]
\end{lemma}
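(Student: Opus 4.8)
The plan is to process the nodes in a fixed topological order $X_1,\dots,X_n$ (so $\Pi_i\subseteq\{X_1,\dots,X_{i-1}\}$ for all $i$), establish the single-step recurrence, and then iterate it from the trivial base case $S_0=\{()\}$ with $d_{\chi^2}(P_{X_1,\dots,X_0},Q_{X_1,\dots,X_0},S_0)=0$; since $(1+1/n)^n=O(1)$, this gives $d_{\chi^2}(P_{X_1,\dots,X_n},Q_{X_1,\dots,X_n},S_n)\le O(\varepsilon^2)$. Write $D_k:=d_{\chi^2}(P_{X_1,\dots,X_k},Q_{X_1,\dots,X_k},S_k)$. Two structural facts drive everything. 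First, a \emph{factorization of $S_k$}: by \eqref{eq:def:S_k}, $(y,x_k)\in S_k$ iff $y\in S_{k-1}$ and $x_k\in J(\pi):=\{b\in\{0,1\}:P(b,\pi)\ge c\varepsilon^2/(2^{d+1}n)\}$ for $\pi:=\pi_k(y)$; in particular $J(\pi_k(y))\ne\varnothing$ forces the parent marginal $P(\pi_k(y))$ (i.e.\ $\Pr[\Pi_k=\pi_k(y)]$) to be $\ge c\varepsilon^2/(2^{d+1}n)$, so only \emph{heavy} parent configurations of $X_k$ are relevant inside $S_k$. Second, since the hypothesis $Q$ from \autoref{algo:near_proper_learning_chi_square} is Markov with respect to the same DAG, $P(y,x_k)-Q(y,x_k)=P(y)\bigl(P(x_k\mid\pi)-Q(x_k\mid\pi)\bigr)+\bigl(P(y)-Q(y)\bigr)Q(x_k\mid\pi)$, abbreviating $P(y)=P_{X_1,\dots,X_{k-1}}(y)$ and $\pi=\pi_k(y)$.

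First I would set up the conditioning event. With $m=\Theta(2^dn^2\log(2^dn)/(c\varepsilon^2))$ and $K=\Theta(\log(2^{d+1}n))$, a Chernoff bound gives $N_\pi\ge\tfrac12 m\,P(\pi)=\Omega(n\log(2^dn))$ samples in every heavy $\pi$; then \cref{prop:chi_square_learn}, applied to the two-element domain $\{0,1\}$ with failure probability $\delta\asymp 1/(2^{d+1}n)$ (so the algorithm's $K$ is exactly the one the proposition requires), yields $d_{\chi^2}(P_{X_i\mid\Pi_i=\pi},Q_{X_i\mid\Pi_i=\pi})\lesssim\frac{\log(2^dn)}{N_\pi}\lesssim\frac{\varepsilon^2}{2^dn^2\,P(\pi)}$, and the per-coordinate deviation bound in the proof of \cref{prop:chi_square_learn} gives $Q(b\mid\pi)\ge\tfrac12 P(b\mid\pi)$ for every $(b,\pi)$ with $P(b,\pi)\ge c\varepsilon^2/(2^{d+1}n)$. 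A union bound over the at most $2^{d+1}n$ configurations keeps all of this true simultaneously with probability $\ge 9/10$; we condition on it, rescaling $\delta$ exactly as in \cref{prop:chi_square_learn} to reach the high-probability guarantee of the lemma.

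Next I would expand $D_k=\sum_{(y,x_k)\in S_k}(P(y,x_k)-Q(y,x_k))^2/Q(y,x_k)$ with the telescoping identity, which produces exactly three sums: a \emph{parent} term $\sum_{y\in S_{k-1}}\frac{(P(y)-Q(y))^2}{Q(y)}Q\bigl(J(\pi_k(y))\mid\pi_k(y)\bigr)\le D_{k-1}$ (the factor $Q(J\mid\pi)\le 1$ also swallows the mass removed at this level); a \emph{cross} term that, by Cauchy--Schwarz, is at most $2\sqrt{D_{k-1}\cdot E_k}$, where $E_k$ is the third sum, so $D_k\le D_{k-1}+E_k+2\sqrt{D_{k-1}E_k}\le(1+\tfrac1n)D_{k-1}+(n+1)E_k$; and the \emph{conditional} term $E_k:=\sum_{y\in S_{k-1}}\frac{P(y)^2}{Q(y)}\,d_{\chi^2}\bigl(P_{X_k\mid\pi_k(y)},Q_{X_k\mid\pi_k(y)},J(\pi_k(y))\bigr)$. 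It thus suffices to show $E_k=O(\varepsilon^2/n^2)$. Grouping $E_k$ by the (necessarily heavy) value $\pi$ of $\pi_k(y)$ gives $E_k=\sum_{\pi\text{ heavy}}d_{\chi^2}(P_{X_k\mid\pi},Q_{X_k\mid\pi},J(\pi))\cdot W(\pi)$ with $W(\pi):=\sum_{y\in S_{k-1}:\pi_k(y)=\pi}P(y)^2/Q(y)$, and the conditional bound from the conditioning event turns this into $E_k\lesssim\frac{\varepsilon^2}{2^dn^2}\sum_{\pi\text{ heavy}}\frac{W(\pi)}{P(\pi)}$. Since $X_k$ has at most $2^d$ parent configurations, it would then be enough to prove $W(\pi)\lesssim P(\pi)$ for every heavy $\pi$ (up to polylog factors), giving $\sum_\pi W(\pi)/P(\pi)\lesssim 2^d$ and $E_k\lesssim\varepsilon^2/n^2$, whence the recurrence follows and iterates to $O(\varepsilon^2)$.

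The hard part is precisely this bound $W(\pi)\lesssim P(\pi)$. The trivial estimates fail badly: $W(\pi)\le D_{k-1}+2P(S_{k-1})$ is only $O(1)$, while $1/P(\pi)$ can be as large as $2^{d+1}n/(c\varepsilon^2)$ and there can be $\Theta(2^d)$ heavy configurations; and one cannot recover $W(\pi)\lesssim P(\pi)$ from a pointwise ratio bound $P(y)/Q(y)=O(1)$ on $S_{k-1}$, because at the sample budget $m=\tilde O(2^dn^2/\varepsilon^2)$ the per-coordinate guarantee is only $Q(x_i\mid\pi_i)\ge(1-O(n^{-1/2}))P(x_i\mid\pi_i)$, so the best pointwise bound is $P(y)/Q(y)\le e^{O(\sqrt n)}$, which is useless here. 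The right move is to factor through the subcube: since $\{y:\pi_k(y)=\pi\}$ is a subcube and $\sum_{y\in T}P(y)^2/Q(y)=d_{\chi^2}(P,Q,T)+2P(T)-Q(T)$ with $T=S_{k-1}\cap\{y:\pi_k(y)=\pi\}$, the claim reduces to $d_{\chi^2}(P_{X_1,\dots,X_{k-1}},Q_{X_1,\dots,X_{k-1}},T)\lesssim P(\pi)$ — the restricted $\chi^2$ error inside each parent-subcube is at most the $P$-mass of that subcube. This is \emph{not} implied by $D_{k-1}=O(\varepsilon^2)$ (which only gives $d_{\chi^2}(\cdot,\cdot,T)=O(\varepsilon^2)$, far too weak once summed against $1/P(\pi)$), and recognizing $d_{\chi^2}(\cdot,\cdot,T)$ as a $\chi^2$-quantity for the sub-Bayes-net obtained by conditioning on $X_{\Pi_k}=\pi$ (to which the same recurrence would apply) only closes the loop at the cost of a simultaneous, subcube-indexed strengthening of the induction hypothesis — together with controlling the parent-marginal ratios $P(\pi)/Q(\pi)$, for which a crude total-variation bound fails on the minimally heavy configurations of mass $\asymp\varepsilon^2/(2^dn)$. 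Carrying out this refined induction is where essentially all of the work behind \autoref{lemma:near_proper_chi_squared_learning_bayes_recur} — and the ``slightly better'' improvement over the straightforward $\tilde O(2^dn^2/\varepsilon^3)$ analysis — lies.
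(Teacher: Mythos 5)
Your proof deviates from the paper's in one structural choice that turns out to matter a lot: you linearize the error additively, writing $P(y,x_k)-Q(y,x_k)=P(y)\bigl(P(x_k\mid\pi)-Q(x_k\mid\pi)\bigr)+\bigl(P(y)-Q(y)\bigr)Q(x_k\mid\pi)$, which produces a cross term you then control by Cauchy--Schwarz, arriving at $D_k\le D_{k-1}+E_k+2\sqrt{D_{k-1}E_k}$. That forces you to prove $E_k=O(\varepsilon^2/n^2)$, and you correctly reduce this to $\sum_\pi W(\pi)/P(\pi)\lesssim 2^d$ (or $W(\pi)\lesssim P(\pi)$ per heavy $\pi$), and you correctly observe that this is not implied by $D_{k-1}=O(\varepsilon^2)$ and would seem to need a subcube-indexed strengthening of the induction. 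You then stop, conceding that ``essentially all of the work'' lies there. That is where the gap is: this is not where the work lies, because the paper never proves, and never needs, anything of the form $W(\pi)\lesssim P(\pi)$.

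The paper instead uses the \emph{multiplicative} factorization of the Bayes-net density ratio, $\frac{P^2(a,g,b)}{Q(a,g,b)}=\frac{P^2(a,g)}{Q(a,g)}\cdot\frac{P^2(b\mid a)}{Q(b\mid a)}$, and writes $\sum_x\frac{P^2(x)}{Q(x)}=\sum_{a,g}\frac{P^2(a,g)}{Q(a,g)}\bigl[1+\bigl(\sum_b\frac{P^2(b\mid a)}{Q(b\mid a)}-1\bigr)\bigr]$. This has \emph{no} cross term: $D_k$ equals (a set-discrepancy term, Lemma~\ref{lemma:residual_1}) plus $D_{k-1}$ plus a single ``main term'' $\sum_{a,g}\frac{P^2(a,g)}{Q(a,g)}\bigl(-1+\sum_b\frac{P^2(b\mid a)}{Q(b\mid a)}\bigr)$. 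The main term is then bounded (Lemma~\ref{lemma:residual_2}) by splitting $\frac{P^2(a,g)}{Q(a,g)}=\frac{(P(a,g)-Q(a,g))^2}{Q(a,g)}+2P(a,g)-Q(a,g)$ and applying, on the $\chi^2$ part, only the \emph{trivial} bound $1/P(\pi)\le 2^{d+1}n/(c\varepsilon^2)$, and on the $2P-Q$ part the bound $\sum_g 2P(a,g)/P(a)\le 2$ with at most $2^d$ values of $a$. This yields exactly your quantity $\sum_\pi W(\pi)/P(\pi)\lesssim \frac{2^d n}{c\varepsilon^2}\,D_{k-1}+2^{d+1}$ and hence main term $\lesssim D_{k-1}/n+\varepsilon^2/n^2$, which is perfectly usable here because there is no cross term to amplify the $D_{k-1}/n$ piece: it is simply absorbed into $(1+O(1/n))D_{k-1}$. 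In your decomposition the same bound on $E_k$ is fatal, since $2\sqrt{D_{k-1}\cdot(D_{k-1}/n)}=2D_{k-1}/\sqrt{n}$ gives a multiplicative factor $1+\Theta(1/\sqrt{n})$ and the recurrence blows up by $e^{\Theta(\sqrt n)}$. So the ``refined induction'' you were reaching for is not the missing ingredient --- the missing ingredient is to abandon the additive decomposition and factor the ratio $P^2/Q$ multiplicatively, after which the weaker and easily-provable estimate on $\sum_\pi W(\pi)/P(\pi)$ suffices. Your conditioning setup, the identification of $E_k$, and the observation that $E_k$ groups by heavy parent configuration are all fine and match the paper; it is the handling of the cross term (and the resulting overshoot in what must be proved about $W(\pi)$) that needs to change.
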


\begin{proof}[Sketch]
  By lower bounding all parents' masses, we can guarantee that the $\chi^2$ error on all conditionals of the hypothesis will be bounded by,
  \[d_{\chi^2} (P_{X_i | \Pi_i}, Q_{X_i | \Pi_i}) \leqslant \frac{O(1)}{m \cdot P(\Pi_i)}\]
  with an application of a Chernoff bound; in contrast, the naive approach is to simply take $P(\Pi_i) \geqslant \frac{\varepsilon^2}{2^d n}$.
  With this and some careful rearrangement of the residual terms, we are able to show this recurrence. Such arrangement is tricky because the standard $\chi^2$ form becomes $-2 P(S) + Q(S) + \sum_i \frac{P^2_i}{Q_i}$ rather than the standard $-1 + \sum_i P_i^2 / Q_i$ form.
\end{proof}
For testing in $d_H$, we need the following tweaks: from the obtained $Q$, we shift masses to get a $\tilde{Q}$
s.t., $\tilde{Q} (\tilde{S}_n) \geqslant 1 - O (\varepsilon^2)$, while
maintaining the graphical structure of $Q$ and the $d_{\chi^2}$ closeness from
$P$ on $\tilde{S}_n$; though this could potentially make the $d_{\chi^2}$ on the entire support
unbounded (it will be infinity since there are $0$'s in the denominator), it does not affect our downstream testing. As we will see later on
in the analysis, this is an extra (and seemingly necessary) step for testing maximum in-degree in
$d_H$; but for $d_{\tmop{TV}}$, interestingly, $Q$ and $\tilde{S}_n$ is sufficient.

\begin{thm}
  \label{theorem:near_proper_chi_squared_learning_bayes}Given $O \left(2^d n^2 \log(2^d n) / \varepsilon^2 \right)$ samples, we can obtain a
  {\tmem{proper}} hypothesis $Q$ of the degree-$d$ Bayes net $P$, and a $\tilde{S}_n
  \subset \{0, 1\}^n$ on which $d_{\chi^2} (P, Q, \tilde{S}_n) \leqslant O
  (\varepsilon^2)$, and $P (\tilde{S}_n) \geqslant 1 - O (\varepsilon^2)$.
  
  Additionally, with some post-processing (without extra samples), we can obtain another proper
  hypothesis $\tilde{Q}$ with guarantees subsuming the above, and:
  $\tilde{Q} (\tilde{S}_n) = 1 > 1 - O (\varepsilon^2)$.
\end{thm}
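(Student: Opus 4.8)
The plan is to show that \autoref{algo:near_proper_learning_chi_square} (which internally calls \autoref{algo:majority_set_learning}) already produces the pair $(Q,\tilde S_n)$ of the first part, and then to supply and analyze the post‑processing map $Q\mapsto\tilde Q$ of \eqref{eq:mass_shifting_Hellinger}.

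\emph{Part 1: the pair $(Q,\tilde S_n)$.} First I would observe that the hypothesis $Q=\prod_{i=1}^n Q(\cdot\mid\pi_i)$ built in \autoref{algo:near_proper_learning_chi_square} is, by construction, Markov with respect to the input DAG $G$ and hence a bona fide degree‑$d$ Bayes net; the \emph{proper} part is therefore immediate, and the sample count is exactly the $m=O(2^dn^2\log(2^dn)/\varepsilon^2)$ drawn by the algorithm. I would then take $\tilde S_n$ to be the support returned by \autoref{algo:majority_set_learning} and invoke \cref{lemma:tilde_S_k_algo_analysis}, which guarantees (except with small probability) that $S_n'\subseteq\tilde S_n\subseteq S_n$. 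From $S_n'\subseteq\tilde S_n$, together with a union bound over the at most $2^{d+1}n$ ``light'' pairs $(i,(x_i,\pi_i))$, each of mass below $4c\varepsilon^2/(2^{d+1}n)$, I obtain $P(\tilde S_n)\ge P(S_n')\ge 1-O(\varepsilon^2)$. From $\tilde S_n\subseteq S_n$, the monotonicity of $S\mapsto d_{\chi^2}(P,Q,S)=\sum_{i\in S}(P_i-Q_i)^2/Q_i$ in $S$ (a sum of nonnegative terms), and \cref{lemma:near_proper_chi_squared_learning_bayes_recur}, I get $d_{\chi^2}(P,Q,\tilde S_n)\le d_{\chi^2}(P,Q,S_n)\le O(\varepsilon^2)$. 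A final union bound folds together the failure events of \cref{lemma:tilde_S_k_algo_analysis} and of \cref{lemma:near_proper_chi_squared_learning_bayes_recur} — the latter itself a union over the $2^dn$ conditionals, which is why $K=\Theta(\log(2^{d+1}n))$ and why the $\log(2^dn)$ enters $m$ (using \cref{prop:chi_square_learn}); since the dominant sample requirement in both subroutines is $m$, the overall budget is $O(2^dn^2\log(2^dn)/\varepsilon^2)$.

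\emph{Part 2: the collapsed hypothesis $\tilde Q$.} For the second hypothesis I would analyze the mass‑shifting step \eqref{eq:mass_shifting_Hellinger}. The key structural observation is that $\tilde S_n$ is a product‑structured event along $G$: writing $\mathcal T_i$ for the set of $(x_i,\pi_i)$ configurations \emph{not} marked excluded by \autoref{algo:majority_set_learning}, one has $x\in\tilde S_n\iff \forall i,\ (x_i,\pi_i(x))\in\mathcal T_i$, and each constraint touches only $X_i$ and its parents. Consequently one can force a Bayes net on $G$ to be supported on $\tilde S_n$ merely by zeroing the CPD entry $Q(x_i\mid\pi_i)$ on every excluded pair and renormalizing the two‑point conditional $Q(\cdot\mid\pi_i)$; since every parent configuration $\pi$ that is reachable inside $\tilde S_n$ retains at least one child value, this yields a valid degree‑$d$ Bayes net $\tilde Q$ on $G$ with $\tilde Q(\tilde S_n)=1$, using no fresh samples — so both graphical structure and sample count are inherited. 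It remains to control $d_{\chi^2}(P,\tilde Q,\tilde S_n)$. Writing $\tilde Q_x=Q_x/r_x$ with $r_x=\prod_{i\in B(x)}Q(x_i\mid\pi_i(x))\in(0,1]$, where $B(x)$ indexes the coordinates whose sibling entry was zeroed, the elementary split $\frac{(P_x-\tilde Q_x)^2}{\tilde Q_x}\le 2\frac{(P_x-Q_x)^2}{Q_x}+2Q_x\frac{(1-r_x)^2}{r_x}$ reduces everything to bounding $\sum_{x\in\tilde S_n}Q_x(1-r_x)^2/r_x=O(\varepsilon^2)$; after that, one records that $d_{\chi^2}(P,\tilde Q)$ on all of $\{0,1\}^n$ may be infinite (there are zero entries of $\tilde Q$ where $P$ is positive), which is harmless since only $d_{\chi^2}(P,\tilde Q,\tilde S_n)$ feeds the downstream $\chi^2$–TV (and, with the extra shift, $\chi^2$–$d_H$) tolerant tester.

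\emph{The main obstacle.} The delicate point is exactly the bound $\sum_{x\in\tilde S_n}Q_x(1-r_x)^2/r_x=O(\varepsilon^2)$. An excluded child value need not have small \emph{conditional} probability — only small \emph{joint} probability — so a single collapsed slice can inflate a conditional by a constant factor, and a naive expansion of $1-r_x\le\sum_{i\in B(x)}Q(1-x_i\mid\pi_i(x))$ followed by a crude union over the $n$ coordinates loses a factor of $n$. The argument must instead be organized per parent‑slice $(i,\pi)$: there are at most $2^{d+1}n$ collapsed slices, and on each the total mass that is shifted is only $O(\varepsilon^2/(2^dn))$, because the excluded child value has empirical — hence, via \cref{lemma:tilde_S_k_algo_analysis}, true — joint mass below the $\Theta(\varepsilon^2/(2^dn))$ threshold, and the learned $Q$ tracks these \emph{joint} masses $P(x_i,\pi_i)$ on the majority support (this is precisely where the quantitative content of \cref{lemma:near_proper_chi_squared_learning_bayes_recur}, and ultimately the add‑$K$ analysis of \cref{prop:chi_square_learn}, is used, rather than only the conditional guarantees). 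Summing the per‑slice overheads then yields $O(\varepsilon^2)$. I expect this slice‑wise bookkeeping — matching every over‑assigned conditional against the small joint mass it absorbs — to be the crux of the proof.
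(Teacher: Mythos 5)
Your Part 1 argument matches the paper exactly: properness of $Q$ is by construction, $P(\tilde S_n)\ge 1-O(\varepsilon^2)$ follows from $S'_n\subseteq\tilde S_n$ and the $\varepsilon^2/(2^dn)$ threshold summed over $\le 2^{d+1}n$ slices, and $d_{\chi^2}(P,Q,\tilde S_n)\le d_{\chi^2}(P,Q,S_n)\le O(\varepsilon^2)$ follows from $\tilde S_n\subseteq S_n$, monotonicity of partial $\chi^2$ in the set, and \cref{lemma:near_proper_chi_squared_learning_bayes_recur}. No issue there.

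For Part 2 you go down a substantially harder road than the paper, and I don't think your route closes. The paper's treatment of $\tilde Q$ is nearly free: after the renormalization in \eqref{eq:mass_shifting_Hellinger}, the denominator $\sum_{b\in B(a,\tilde S_k)}Q_{X_k|\Pi_k}(b\mid a)$ is at most $1$, so $\tilde Q_{X_k|\Pi_k}(b\mid a)\ge Q_{X_k|\Pi_k}(b\mid a)$ \emph{pointwise} on the surviving children $b\in B(a,\tilde S_k)$. Hence $P^2_{X_k|\Pi_k}(b|a)/\tilde Q_{X_k|\Pi_k}(b|a)\le P^2_{X_k|\Pi_k}(b|a)/Q_{X_k|\Pi_k}(b|a)$ term by term, and the per-node quantity driving the recursion, $-1+\sum_{b\in B(a,\tilde S_k)}P^2/\tilde Q$, is dominated by the unshifted one and still enjoys \eqref{eq:nice_chi_squared_distance_shifted}. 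Plugging this into the same recursive decomposition gives $d_{\chi^2}(P,\tilde Q,\tilde S_n)=O(\varepsilon^2)$ with no new estimate to prove. You instead write $\tilde Q_x=Q_x/r_x$, apply the split $\frac{(P_x-\tilde Q_x)^2}{\tilde Q_x}\le 2\frac{(P_x-Q_x)^2}{Q_x}+2Q_x\frac{(1-r_x)^2}{r_x}$, and then must bound $\sum_{x\in\tilde S_n}Q_x(1-r_x)^2/r_x$. You correctly flag that this is the crux of your plan, but you do not actually bound it, and the sketch you give is unlikely to work as stated: as you yourself observe, a single collapsed slice can have $1-Z_{i,\pi}$ as large as a constant (the excluded child's \emph{conditional} mass can be $\Theta(1)$ even though its \emph{joint} mass is $O(\varepsilon^2/(2^dn))$, precisely when $P(\Pi_i=\pi)$ itself is near the exclusion threshold). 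Since $r_x$ is a product over all such slices that $x$ traverses, $1/r_x$ can grow multiplicatively, and there is no a priori bound on how many constant-loss slices a single $x\in\tilde S_n$ can encounter; the ``per-slice bookkeeping'' you gesture at would have to decouple this product, which is exactly the kind of multiplicative accumulation the paper's monotonicity observation sidesteps. In short: your triangle-type split manufactures a hard problem where the intended proof has none; the fix is to use $\tilde Q\ge Q$ on $B(a,\tilde S_k)$ directly rather than controlling the deviation $\tilde Q-Q$.
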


\begin{proof}
  By \cref{lemma:near_proper_chi_squared_learning_bayes_recur}, and the
  fact that $\tilde{S}_n \subseteq S_n$,
  \begin{align*}
    d_{\chi^2} (P_{X_1, \ldots, X_n}, Q_{X_1, \ldots, X_n}, \tilde{S}_n) 
    \!\leqslant\! d_{\chi^2} (P_{X_1, \ldots, X_n}, Q_{X_1, \ldots, X_n}, S_n),
  \end{align*}
  which is $O (\varepsilon^2)$. This concludes the proof.
\end{proof}

\subsection{Efficient Estimation of $O (\varepsilon^2)$-effective support}
\label{section:estimation_of_support}
We write the formal statement regarding $\tilde{S}_n$ assumed in \cref{subsection:near-proper-learning-bayes-nets} as \cref{lemma:tilde_S_k_algo_analysis}.
\begin{lemma}
  \label{lemma:tilde_S_k_algo_analysis}There exists a routine (\cref{algo:majority_set_learning}) that takes at
  most $O \left( 2^{d + 1} n \log (2^{d + 1} n) /
  (c \cdot \varepsilon^2) \right)$ samples from $P$, and return an approximation
  $\tilde{S}_n$ with the following guarantees: for all $k \in [n]$, and with
  probability at least $5 / 6$,
  \[ S_k' \subseteq \tilde{S}_k \subseteq S_k, \]
  where $S_k', S_k$ are as defined in \cref{eq:def:S_k_prime,eq:def:S_k}.
\end{lemma}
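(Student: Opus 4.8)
The plan is to show that the empirical estimates $Z_{x_i,\pi_i} = N_{x_i,\pi_i}/m$ produced by \autoref{algo:majority_set_learning} are, with high probability, simultaneously close enough to the true marginals $P(x_i,\pi_i)$ that the threshold test at $2c\varepsilon^2/(2^{d+1}n)$ correctly separates the ``heavy'' configurations (mass $\geqslant 4c\varepsilon^2/(2^{d+1}n)$, defining $S_k'$) from the ``light'' ones (mass $< c\varepsilon^2/(2^{d+1}n)$, whose complement defines $S_k$). Concretely: if $P(x_i,\pi_i) \geqslant 4c\varepsilon^2/(2^{d+1}n)$, I want $Z_{x_i,\pi_i} > 2c\varepsilon^2/(2^{d+1}n)$ so the pair is \emph{not} added to $\bar S$; and if $P(x_i,\pi_i) < c\varepsilon^2/(2^{d+1}n)$, I want $Z_{x_i,\pi_i} \leqslant 2c\varepsilon^2/(2^{d+1}n)$ so the pair \emph{is} added to $\bar S$. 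In both cases it suffices that the additive deviation $|Z_{x_i,\pi_i} - P(x_i,\pi_i)|$ be smaller than $c\varepsilon^2/(2^{d+1}n)$, i.e.\ roughly a constant fraction of the threshold.

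The key steps, in order. \textbf{(1)} Fix a coordinate $i$ and a configuration $(x_i,\pi_i)$; then $N_{x_i,\pi_i}$ is a sum of $m$ i.i.d.\ Bernoulli indicators with mean $p := P(x_i,\pi_i)$, so $\mathbb{E}[Z_{x_i,\pi_i}] = p$. \textbf{(2)} Apply a multiplicative Chernoff bound. The cleanest route is to only care about configurations near the threshold: set $\tau := c\varepsilon^2/(2^{d+1}n)$. For configurations with $p \geqslant \tau$, a Chernoff bound gives $\Pr[|Z - p| > p/2] \leqslant 2\exp(-mp/12) \leqslant 2\exp(-m\tau/12)$, and since $p \geqslant \tau$ we get $p/2 \geqslant \tau/2$, comfortably inside the required slack once we check constants against the $4\tau$ vs.\ $2\tau$ vs.\ $\tau$ gaps. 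For configurations with $p < \tau$, the upper-tail Chernoff bound gives $\Pr[Z > 2\tau] \leqslant \Pr[N > 2m\tau] \leqslant \exp(-m\tau/3)$ (using $2\tau > 2p$ in the upper-tail form). \textbf{(3)} Plug in $m = 3\cdot 2^{d+1} n \log(6\cdot 2^{d+1}n)/(c\varepsilon^2)$, so that $m\tau = 3\log(6\cdot 2^{d+1}n)$, making each bad-event probability at most a small constant times $1/(6\cdot 2^{d+1}n)^{c''}$ for a constant $c''>1$. \textbf{(4)} Union-bound over all $i\in[n]$ and all $(x_i,\pi_i)\in\{0,1\}^{|\Pi_i|+1}$: there are at most $n\cdot 2^{d+1}$ such pairs (since $|\Pi_i|\leqslant d$), so the total failure probability is at most $n\cdot 2^{d+1}\cdot O(1/(6\cdot 2^{d+1}n)) \leqslant 1/6$ after fixing the constants, giving success probability $\geqslant 5/6$. \textbf{(5)} Conclude: on the good event, a point $x\in\{0,1\}^k$ lies in $\tilde S_k$ iff none of its prefixes' child-parent pairs $(x_i,\pi_i(x))$, $i\in[k]$, were marked excluded; by the two inclusions from step~(2)--(3), every $x\in S_k'$ survives (all its pairs have $p\geqslant 4\tau$, hence $Z>2\tau$, hence not excluded) and every surviving $x$ has all its pairs with $p\geqslant\tau$ (a pair with $p<\tau$ would have $Z\leqslant 2\tau$, hence excluded), i.e.\ $x\in S_k$. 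This is exactly $S_k'\subseteq\tilde S_k\subseteq S_k$, and since it holds for the single good event it holds for all $k\in[n]$ simultaneously.

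I do not expect a genuine obstacle here; the only thing requiring mild care is \textbf{bookkeeping of constants}: one must choose $c$ (or the absorbed absolute constants) so that the Chernoff slack $p/2$ or the gap $2\tau - p$ is strictly larger than what the thresholds $\{4\tau, 2\tau, \tau\}$ demand, \emph{and} simultaneously so that $m\tau = \Theta(\log(2^{d+1}n))$ makes the union bound over $\leqslant n2^{d+1}$ events come out below $1/6$. A secondary subtlety worth a sentence is the identification of ``point in $\tilde S_k$'' with ``no prefix pair excluded'': since $\pi_i(x)$ depends only on coordinates that are parents of $i$, and parents of $i$ in a topological order appear among the first $i-1$ coordinates (or we simply define $S_k$ via the first $k$ coordinates as in \cref{eq:def:S_k}), membership in $\tilde S_k$ is determined by the pairs $(x_i,\pi_i(x))$ for $i\leqslant k$ only, which is why the same good event yields the inclusions for every $k$ at once.
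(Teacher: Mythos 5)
Your proposal takes essentially the same route as the paper: concentration of each empirical frequency $Z_{x_i,\pi_i}$ via a Chernoff bound, a threshold test at $2\tau$ (with $\tau = c\varepsilon^2/(2^{d+1}n)$) that must keep in everything with mass $\geqslant 4\tau$ and drop everything with mass $<\tau$, and a union bound over the at most $n\cdot 2^{d+1}$ child--parent-configuration pairs, yielding failure probability $\leqslant 1/6$. The paper phrases the light-mass case as a first-order stochastic dominance comparison to a Binomial at mean exactly $\tau$ while you apply the upper-tail Chernoff directly; these are the same argument. The only wrinkle is in your step~(2): for heavy configurations you write $\Pr[|Z-p|>p/2]\leqslant 2\exp(-m\tau/12)$ using only $p\geqslant\tau$, which with the given $m$ gives roughly $(6\cdot 2^{d+1}n)^{-1/4}$ and would not survive the union bound. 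As you note yourself, the relevant heavy configurations actually have $p\geqslant 4\tau$, so the exponent improves by the extra factor of $4$ to $\exp(-m\tau/3)$ (and only the lower tail is needed, dropping the prefactor $2$); this matches the paper's $\exp(-mp''/8)\leqslant\exp(-m\tau/2)\leqslant 1/(6\cdot 2^{d+1}n)$ and closes the bound.
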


    \begin{proof}
    \label{proof:lemma:tilde_S_k_algo_analysis}
      We prove by analyzing \autoref{algo:majority_set_learning}. The algorithm takes $m = \frac{3}{c} \cdot 2^{d + 1} n \log (6 \cdot 2^{d + 1} n) /
      \varepsilon^2$ samples and checks if the ratio of each $X_i, \Pi_i$ exceeds $2
      c \frac{\varepsilon^2}{2^{d + 1} n}$ -- for $x \in \{0, 1\}^n$, if all $i \in
      [n]$, $(x_i, \pi_i) \assign (x_i (x), \pi_i (x)) \in \{0, 1\}^{| \Pi_i | +
      1}$, $N_{x_i, \pi_i} \geqslant 2 c \frac{\varepsilon^2 \cdot m}{2^{d + 1} n}$
      then $x$ gets added to $\tilde{S}_n$, where $N_{x_i, \pi_i}$ is the number of
      occurrences of $X_i = x_i, \Pi_i = \pi_i$ over the $m$ samples.
    
      Our argument here is to ensure that the $a \in \{0, 1\}^{| \Pi_i | + 1}$
      with smaller masses than $c \frac{\varepsilon^2}{2^{d + 1} n}$ won't pass
      the procedure, and that for each $i \in [n]$, at most $O (\varepsilon^2 / n)$
      of masses are dropped. We do so via a Chernoff bound, a stochastic dominance argument
      and a union bound. First, consider the Bernoulli distribution $T \sim
      \tmop{Bern} (m, p)$, where $p = c \frac{\varepsilon^2}{2^{d + 1} n}$. By Chernoff's inequality,
      
      \[ \Pr \left[ T \geqslant 2 \cdot c \frac{\varepsilon^2}{2^{d + 1} n} \right]
       = \Pr [T \geqslant 2 mp] \leqslant \exp (- mp / 3) = \exp \left( - mc
       \frac{\varepsilon^2}{2^{d + 1} n} / 3 \right) = \frac{1}{6 \cdot 2^{d + 1}
       n} . \]
      
      Since any $T' \sim \tmop{Bern} (m, p')$ is first-order stochastically
      dominated by $T \sim \tmop{Bern} (m, p)$ if $p' \leqslant p$, thus for
      $p' < \frac{c}{2}  \frac{\varepsilon^2}{2^{d + 1} n}$, $\Pr \left[ T'
      \geqslant 2 c \frac{\varepsilon^2}{2^{d + 1} n} \right] \leqslant \Pr \left[
      T \geqslant 2 c \frac{\varepsilon^2}{2^{d + 1} n} \right] \leqslant
      \frac{1}{2^{d + 1} n}$.
      
      Therefore, for $p \leqslant c \frac{\varepsilon^2}{2^{d + 1} n}, T \sim
      \tmop{Bern} (m, p)$, $\Pr \left[ T \geqslant 2 c \frac{\varepsilon^2}{2^{d +
      1} n} \right] \leqslant \frac{1}{2^{d + 1} n}$.
      
      Via similar argument, all $p \geqslant 4 c \frac{\varepsilon^2}{2^{d + 1}
      n}$ will pass the test with high probability: let $T'' \sim \tmop{Bern} (m,
      p'')$, where $p'' \geqslant 4 c \frac{\varepsilon^2}{2^{d + 1} n}$, and by
      Chernoff,

      \[\Pr \left[ T'' \leqslant \frac{1}{2} \cdot 4 c \frac{\varepsilon^2}{2^{d +
        1} n} \right] = \Pr \left[ T'' \leqslant \frac{1}{2} mp'' \right]
        \leqslant \exp (- mp'' / 8) 
        \leqslant \exp \left( - \frac{1}{2} mc \frac{\varepsilon^2}{2^{d + 1} n} \right)
        \leqslant \frac{1}{6 \cdot 2^{d + 1} n} .\]
        
      Finally, a union bound over all elements $a \in \{0, 1\}^{\Pi_i + 1} =
      \tmop{Support} (X_i, \Pi_i)$ implies that (w.h.p.) all $i \in [n]$ and $a
      \in \{0, 1\}^{\Pi_i + 1}$: $P_{X_i, \Pi_i} (a) \geqslant 4 c
      \frac{\varepsilon^2}{2^{d + 1} n}$ will pass the test; $P_{X_i, \Pi_i} (a)
      \leqslant c \frac{\varepsilon^2}{2^{d + 1} n}$ will fail the test. This
      tells us that, for $k \in [n]$, $\left\{ x \in \{0, 1\}^k | \forall i \in
      [k], P_{X_i, \Pi_i} (x_i, \pi_i (x)) \geqslant 4 c \frac{\varepsilon^2}{2^{d
      + 1} n} \right\} \subset \tilde{S}_n \subset \left\{ x \in \{0, 1\}^k |
      \forall i \in [k], P_{X_i, \Pi_i} (x_i, \pi_i (x)) \geqslant c
      \frac{\varepsilon^2}{2^{d + 1} n} \right\}$.
    \end{proof}

  \subsection{Some useful high probability events for \cref{lemma:near_proper_chi_squared_learning_bayes_recur}}\label{sec:high_prob_events}

In this subsection, we analyze some crucial high probability events in the
form of \cref{eq:nice_chi_squared_distance_not_shifted,eq:nice_chi_squared_distance_shifted}, to facilitate our main proof.
But first we need to build up some technical machinery.
We will use the term
{\tmem{configuration}} to refer to some set of binary strings that is a subset
of $\{0, 1\}^n$. First, we define the set of configurations for parent nodes
with ``large enough'' masses,
\[ A_{S_k} \assign \left\{ a \in \{0, 1\}^{| \Pi_k |} \mid P_{\Pi_k} (a) \geqslant
   c \cdot \frac{\varepsilon^2}{2^d n} \right\} . \]
We define another closely related set of configurations: let $C (a, S_k)$ be
the set of configurations (excluding the current parent nodes $\Pi_k$ and last node
$X_k$) remaining in $S_k$,\footnote{This term is used to analyze
(independently) the two terms generated by conditioning on the parent nodes
$\Pi_k$ in this Markov chain: $\{ X_1, \ldots, X_{k - 1} \} \backslash \Pi_k
\rightarrow \Pi_k \rightarrow X_k$.} given $\Pi_k = a$ and it is independent of the value $X_k$ takes. Formally:
\begin{align}
   \rightarrow\; & C(a, S_k) = \{ x \in \{0, 1\}^{k - 1 - | \Pi_k |} \mid  \exists x'
  \in \{0, 1\} \nobracket \nobracket \{(X_1, \ldots, X_{k - 1} \setminus \Pi_k) = x,
  \Pi_k = a, X_k = x' \} \in S_k \} \nonumber
\end{align}
Or equivalently $C (a, S_k) = \{ x \in \{0, 1\}^{k - 1 - | \Pi_k |} \mid$ $\{(X_1,
\ldots, X_{k - 1} \setminus \Pi_k) = x, \Pi_k = a\} \in S_{k - 1} \}$ --
fixing $\Pi_k$ to $a$ out of the $X_1, \ldots, X_k$ variables and checking if $x, a
\in S_{k - 1}$. Similarly, 
\begin{align}
  \rightarrow\; & B(a, S_k) = \{ x \in \{0, 1\}\mid  \forall x' \in \{0, 1\}^{k - 1
  - | \Pi_k |} \nobracket \nobracket \{(X_1, \ldots, X_{k - 1} \setminus \Pi_k) = x', \Pi_k = a,
  X_k = x \} \in S_k \} \nonumber
\end{align}

Or we may equivalently define $B (a, S_k) = \left\{ x \in \{ 0, 1 \} \mid P (\Pi_k = a, X_k = x)
\geqslant c \cdot \frac{\varepsilon^2}{2^{d + 1} n} \right\}$.

Other sets of configurations include: $C_k \assign \{0, 1\}^{k - 1 - |
\Pi_k |}$, the full set of configurations for $\{X_1, \ldots, X_{k - 1}
\setminus \Pi_k \}$ without any restrictions; just as $B_k \assign \{0,
1\}$ is related to $X_k$; $A_k \assign \{ 0, 1 \}^{| \Pi_k |}$; $A_{S^c_k} \assign A_k
\backslash A_{S_k}$; $ B (a, S^c_k) \assign B_k
\backslash B (a, S_k)$; $ C (a,
S^c_k) \assign C_k \backslash C (a, S_k)$. Note that $x$ where $(x_k, \pi_k) \in
A_{S_k}$ may not imply $x \in S_k$ (its converse is true); and knowing
$\Pi_k = a$, in order for $x \in S_k$, it has to satisfy both constraints --
one from $\{X_1, \ldots, X_{k - 1} \setminus \Pi_k \}$ and the other $\{X_k
\}$.

Let $m_{\Pi_k = a}$ be the random variable counting the number of samples
with $\Pi_k = a$. For any $x \in S_n$, every $ k \in [n]$, and $ a \in \{0, 1\}^{|
\Pi_k |}$ satisfies $P (\Pi_k = a) = \sum_{x \in X_k} P (x, \Pi_k = a)
\geqslant 2 \cdot c \frac{\varepsilon^2}{2^{d + 1} n} = c
\frac{\varepsilon^2}{2^d n}$, and thus by Chernoff, $m_{\Pi_k = a} \geqslant
\frac{1}{2} mP (\Pi_k = a)$, with v.h.p. We condition on this event, and with
the learning result in \cref{prop:chi_square_learn}, by setting $K=\log(6 \cdot 2^{d+1} n)$, we can derive: for all $a \in A_{S_k}$,
\begin{equation}
  \Pr \left[ - 1 + \sum_{b \in B} \frac{P_{X_k | \Pi_k}^2 (b|a)}{Q_{X_k |
  \Pi_k} (b|a)} \geqslant \frac{c' \log (2^{d + 1} n)}{m_{\Pi_{k = a}}} \right]
  \leqslant \frac{1}{6}  \frac{1}{2^{d + 1} n},
\end{equation}
for some constant $c'$.

And thus, with high probability, for all $a \in A_{S_k}$,
\begin{equation}
  - 1 + \sum_{b \in B} \frac{P_{X_k | \Pi_k}^2 (b|a)}{Q_{X_k | \Pi_k} (b|a)}
  \leqslant \frac{c' \log (2^{d + 1} n)}{m_{\Pi_{k = a}}} \leqslant \frac{2 c'
  \cdot \log (2^{d + 1} n)}{mP (\Pi_k = a)} .
  \label{eq:nice_chi_squared_on_all}
\end{equation}
We then again condition on this event happening; since \[\sum_{b \in B (a,
S_k)} \frac{P_{X_k | \Pi_k}^2 (b|a)}{Q_{X_k | \Pi_k} (b|a)} \leqslant \sum_{b
\in B} \frac{P_{X_k | \Pi_k}^2 (b|a)}{Q_{X_k | \Pi_k} (b|a)},\] giving us,
\begin{equation}
  - 1 + \sum_{b \in B (a, S_k)} \frac{P_{X_k | \Pi_k}^2 (b|a)}{Q_{X_k | \Pi_k}
  (b|a)} \leqslant \frac{2 c' \cdot \log (2^{d + 1} n)}{mP (\Pi_k = a)} .
  \label{eq:nice_chi_squared_distance_not_shifted}
\end{equation}
In the later part of the proof of
\cref{lemma:near_proper_chi_squared_learning_bayes_recur}, we will need a
stronger condition on our density estimate (a slightly different $Q$ by moving
at most $O (\varepsilon^2)$ mass around) -- we will call it $\tilde{Q}$, and we
can obtain it directly from $Q$ and $\tilde{S}_n$:
\begin{equation} \tilde{Q}_{X_k | \Pi_k} (b|a) \assign \left\{ \begin{array}{cl}
     \frac{Q_{X_k | \Pi_k} (b|a)}{\sum_{b \in B (a, \tilde{S}_k)} Q_{X_k |
     \Pi_k} (b|a)} & \text{ if } b \in B (a, \tilde{S}_k),\\
     0 & \text{ otherwise.}
   \end{array} \right.  \label{eq:mass_shifting_Hellinger} \end{equation}
Furthermore, $\tilde{Q}$ on $\tilde{S}_n$ shares very similar guarantees as
$Q$ on $S_n$. For every $ a \in A_{\tilde{S}_k} \subseteq A_{S_k}$, since $B (a,
\tilde{S}_k) \subseteq B (a, S_k)$ (conditioning on
\autoref{eq:nice_chi_squared_on_all}),

\begin{align}
  \sum_{b \in B (a, \tilde{S}_k)} \frac{P_{X_k | \Pi_k}^2 (b|a)}{Q_{X_k |
  \Pi_k} (b|a)} 
  \leqslant  \sum_{b \in B (a, S_k)} \frac{P_{X_k | \Pi_k}^2 (b|a)}{Q_{X_k |
  \Pi_k} (b|a)} \leqslant 1 + \frac{2 c' \cdot \log (2^{d + 1} n)}{mP (\Pi_k = a)} . \nonumber
\end{align}

As moving masses from $ B (a, \tilde{S}_k^c)$ to $ B (a,
\tilde{S}_k)$ will only reduce the quantity $\frac{P_{X_k | \Pi_k}^2
(b|a)}{\tilde{Q}_{X_k | \Pi_k} (b|a)}$ further, we have
\begin{equation}
  - 1 + \sum_{b \in B (a, \tilde{S}_k)} \frac{P_{X_k | \Pi_k}^2
  (b|a)}{\tilde{Q}_{X_k | \Pi_k} (b|a)} \leqslant \frac{2 c' \cdot \log (2^{d +
  1} n)}{mP (\Pi_k = a)} . \label{eq:nice_chi_squared_distance_shifted}
\end{equation}
Finally, by a union bound, we have that with probability at least $5 / 6$, for
all $k \in [n]$ and $\Pi_k = a$, both statements in
\cref{eq:nice_chi_squared_distance_not_shifted,eq:nice_chi_squared_distance_shifted} will hold. Throughout the rest of
the appendix, we will condition on these two statements.

\subsection{Proof of \cref{lemma:near_proper_chi_squared_learning_bayes_recur}}\label{proof:lemma:near_proper_chi_squared_learning_bayes_recur}

We write the partial sum of $\chi^2$ between $P_{X_1, \ldots, X_k}$ and
$Q_{X_1, \ldots, X_k}$ on the subset $S_k \subset \{0, 1\}^k$ as:

\begin{eqnarray*}
  d_{\chi^2} (P_{X_1, \ldots, X_k}, Q_{X_1, \ldots, X_k}, S_k) & = & \sum_{x
  \in S_k} \frac{(P_{X_1, \ldots, X_k} (x) - Q_{X_1, \ldots, X_k}
  (x))^2}{Q_{X_1, \ldots, X_k} (x)}\\
  & = & \sum_{x \in S_k} - 2 P_{X_1, \ldots, X_k} (x) + Q_{X_1, \ldots, X_k}
  (x) + \frac{P_{X_1, \ldots, X_k} (x)^2}{Q_{X_1, \ldots, X_k} (x)}\\
  & = & - 2 P_{X_1, \ldots, X_k} (S_k) + Q_{X_1, \ldots, X_k} (S_k) + \sum_{x
  \in S_k} \frac{P_{X_1, \ldots, X_k} (x)^2}{Q_{X_1, \ldots, X_k} (x)},
\end{eqnarray*}

By definition of $S_k$, we can write the sum over $x \in S_k$
as $a \in A_{S_k}, b \in B (a, S_k)$ and $g \in C (a, S_k)$:
\begin{eqnarray}
  &  & d_{\chi^2} (P_{X_1, \ldots, X_k}, Q_{X_1, \ldots, X_k}, S_k)
  \nonumber\\
  & = & \left\{ \sum_{a \in A_{S_k}} \frac{P_{\Pi_k}^2 (a)}{Q_{\Pi_k} (a)}
  \cdot 
  \sum_{g \in C (a, S_k)} \frac{P_{X_1, \ldots, X_{k - 1}
  \setminus \Pi_k | \Pi_k}^2 (g \mid a)}{Q_{X_1, \ldots, X_{k - 1} \setminus
  \Pi_k | \Pi_k} (g \mid a)} \cdot \left. \sum_{b \in B (a, S_k)} \frac{P_{X_k
  | \Pi_k}^2 (b \mid a)}{Q_{X_k | \Pi_k} (b \mid a)} \right\} \right.
  \nonumber\\
  &  & - (2 P_{X_1, \ldots, X_k} (S_k) - Q_{X_1, \ldots, X_k} (S_k))
  \nonumber\\
  & = & \left\{ \sum_{a \in A_{S_k}} \frac{P_{\Pi_k = a}^2}{Q_{\Pi_k = a}}
  \cdot 
  \sum_{g \in C (a, S_k)} \frac{P_{X_1, \ldots, X_{k - 1}
  \setminus \Pi_k | \Pi_k}^2 (g \mid a)}{Q_{X_1, \ldots, X_{k - 1} \setminus
  \Pi_k | \Pi_k} (g \mid a)} \cdot \left. \left( - 1 + \sum_{b \in B (a, S_k)}
  \frac{P_{X_k | \Pi_k}^2 (b \mid a)}{Q_{X_k | \Pi_k} (b \mid a)} \right)
  \right\} \right. \nonumber\\
  &  & - \left( 2 P_{X_1, \ldots, X_{k - 1}} (S_{k - 1}) - 
  Q_{X_1, \ldots, X_{k - 1}} (S_{k - 1}) \right) \nonumber\\
  &  & + \sum_{a \in A_{S_k}} 
  \hspace{-0.1em} \sum_{g \in C (a, S_k)} 
  \frac{P_{\Pi_k = a}^2}{Q_{\Pi_k = a}} \cdot \frac{P_{X_1, \ldots, X_{k - 1}
  \setminus \Pi_k | \Pi_k}^2 (g \mid a)}{Q_{X_1, \ldots, X_{k - 1} \setminus
  \Pi_k | \Pi_k} (g \mid a)} \nonumber\\
  &  & + (2 P_{X_1, \ldots, X_{k - 1}} (S_{k - 1}) - Q_{X_1, \ldots, X_{k -
  1}} (S_{k - 1})) \nonumber\\
  &  & - (2 P_{X_1, \ldots, X_k} (S_k) - Q_{X_1, \ldots, X_k} (S_k)) 
  \label{eq:mark_add_minus_terms}\\
  & = & \left\{ \sum_{a \in A_{S_k}} \frac{P_{\Pi_k}^2 (a)}{Q_{\Pi_k} (a)} 
  \sum_{g \in C (a, S_k)}
  \frac{P_{X_1, \ldots, X_{k - 1} \setminus \Pi_k | \Pi_k}^2 (g \mid
  a)}{Q_{X_1, \ldots, X_{k - 1} \setminus \Pi_k | \Pi_k} (g \mid a)} \cdot
  \left. \left( - 1 + \sum_{b \in B (a, S_k)} \frac{P_{X_k | \Pi_k}^2 (b \mid
  a)}{Q_{X_k | \Pi_k} (b \mid a)} \right) \right\} \right. \nonumber\\
  &  & + d_{\chi^2} (P_{X_1, \ldots, X_{k - 1}}, Q_{X_1, \ldots, X_{k - 1}},
  S_{k - 1}) \nonumber\\
  &  & + (2 P_{X_1, \ldots, X_{k - 1}} (S_{k - 1}) - Q_{X_1, \ldots, X_{k -
  1}} (S_{k - 1})) \nonumber\\
  &  & - (2 P_{X_1, \ldots, X_k} (S_k) - Q_{X_1, \ldots, X_k} (S_k)) 
  \label{eq:recurrence_to_k_minus_1}\\
  & \leqslant & 2 c \frac{\varepsilon^2}{n} + 4 c'  \frac{2^d \log (2^d
  n)}{m} + \left( 1 + \frac{2 c' 2^d n \log (2^d n)}{cm \varepsilon^2} \right)
  \times d_{\chi^2} (P_{X_1, \ldots, X_{k - 1}}, Q_{X_1, \ldots, X_{k - 1}},
  S_{k - 1}) .  \label{eq:apply_lemmas}
\end{eqnarray}

We obtain \eqref{eq:mark_add_minus_terms} by adding and subtracting the same terms, giving us a recurrence in $S_{k-1}$ in \eqref{eq:recurrence_to_k_minus_1}. From this, we get \eqref{eq:apply_lemmas} by applying a couple of technical results: \cref{lemma:residual_1,lemma:residual_2}. Finally, by setting $m=\frac{4 c 2^d n^2 \log(2^d n)}{\varepsilon^2}$, we get the desired recursive formulation 
for $d_{\chi^2} (P_{X_1, \ldots, X_k}, Q_{X_1, \ldots, X_k}, S_k)$, concluding our proof of \cref{lemma:near_proper_chi_squared_learning_bayes_recur}.

We now prove the aforementioned technical lemmata.

\begin{lemma}\label{lemma:residual_1}
  \begin{align}
    (2 P_{X_1, \ldots, X_{k - 1}} (S_{k - 1}) - Q_{X_1, \ldots, X_{k - 1}}
    (S_{k - 1}))  - 
    (2 P_{X_1, \ldots, X_k} (S_k) - Q_{X_1, \ldots, X_k} (S_k))  \leqslant 2 c \frac{\varepsilon^2}{n} . \nonumber
  \end{align}
\end{lemma}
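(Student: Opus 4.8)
The plan is to split each string $x\in\{0,1\}^k$ as $x=(x',x_k)$ with $x'\in\{0,1\}^{k-1}$ and exploit that, because $\Pi_k\subseteq\{X_1,\ldots,X_{k-1}\}$, the parent configuration $\pi_k(x)$ depends only on $x'$, so the constraints defining $S_k$ in \eqref{eq:def:S_k} factor: $x\in S_k$ exactly when $x'\in S_{k-1}$ (the constraints for coordinates $i\le k-1$) and $x_k\in B(\pi_k(x'),S_k)$ (the constraint for coordinate $k$), where $B(a,S_k)=\{x_k\in\{0,1\}:P(\Pi_k=a,X_k=x_k)\ge c\varepsilon^2/(2^{d+1}n)\}$ as in \cref{sec:high_prob_events}. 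Marginalizing over $x_k$ then gives $P_{X_1,\ldots,X_{k-1}}(S_{k-1})-P_{X_1,\ldots,X_k}(S_k)=\sum_{x'\in S_{k-1}}\sum_{x_k\in B(\pi_k(x'),S_k^c)}P_{X_1,\ldots,X_k}(x',x_k)$, and the same identity holds with $Q$ in place of $P$ (since $Q$ is Markov with respect to the same DAG, its marginal on the first $k-1$ coordinates is obtained by summing out $x_k$).

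Substituting both identities into the left-hand side of the lemma, the quantity to bound collapses to $\sum_{x'\in S_{k-1}}\sum_{x_k\in B(\pi_k(x'),S_k^c)}\bigl(2P_{X_1,\ldots,X_k}(x',x_k)-Q_{X_1,\ldots,X_k}(x',x_k)\bigr)$. Since $Q\ge 0$ I discard the $-Q$ contribution, leaving twice a sum of $P$-masses over the excluded configurations. Re-indexing that sum by the parent configuration $a=\pi_k(x')$ and the value $x_k$, the inner sum $\sum_{x'\in S_{k-1}:\,\pi_k(x')=a}P_{X_1,\ldots,X_k}(x',x_k)$ is at most the joint mass $P(\Pi_k=a,X_k=x_k)$, which is strictly below $c\varepsilon^2/(2^{d+1}n)$ precisely because $x_k\in B(a,S_k^c)$. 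As there are at most $2^{|\Pi_k|}\cdot 2\le 2^{d+1}$ pairs $(a,x_k)$, the sum is at most $c\varepsilon^2/n$, and doubling it yields the claimed bound $2c\varepsilon^2/n$.

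I do not expect a genuine obstacle here: the argument is a short telescoping-plus-union-bound. The only point requiring care is the bookkeeping behind the factorization of $S_k$ — that the constraints cutting out $S_{k-1}$ are exactly the $i\le k-1$ constraints cutting out $S_k$ (valid since $\pi_i(x)$ uses only coordinates before $i$), and that the mass $P_{X_1,\ldots,X_{k-1}}(S_{k-1})-P_{X_1,\ldots,X_k}(S_k)$ lost in passing from $k-1$ to $k$ sits entirely on configurations $(a,x_k)$ that were discarded exactly because their joint mass fell below the threshold $c\varepsilon^2/(2^{d+1}n)$. Everything past that is the one-line count of at most $2^{d+1}$ parent--child configurations.
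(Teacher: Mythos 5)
Your proof is correct and follows essentially the same route as the paper's: factor $S_k$ into the $S_{k-1}$ constraints plus the new $(x_k,\pi_k)$ threshold constraint, telescope, drop $-Q\le 0$, and union-bound over the at most $2^{d+1}$ excluded $(x_k,\pi_k)$ pairs each of mass below $c\varepsilon^2/(2^{d+1}n)$. The only cosmetic difference is that the paper passes through the Bayes-net factorization $P(x_1,\ldots,x_k)=P(x'\setminus\pi_k\mid\pi_k)P(x_k,\pi_k)$ and sums the conditional to $\le 1$, whereas you bound the partial sum directly by the joint marginal $P(\Pi_k=a,X_k=x_k)$ via monotonicity, which is equivalent and arguably slightly cleaner.
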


\begin{proof}
    \begin{eqnarray*}
      &  & (2 P_{X_1, \ldots, X_{k - 1}} (S_{k - 1}) - Q_{X_1, \ldots, X_{k - 1}}
      (S_{k - 1})) - (2 P_{X_1, \ldots, X_k} (S_k) - Q_{X_1, \ldots, X_k} (S_k))\\
      & = & \sum_{x_1, \ldots, x_{k - 1} \in S_{k - 1}} \sum_{x_k \in \{0, 1\}} 2
      P_{X_1, \ldots, X_k} (x_1, \ldots, x_k) - \sum_{x_1, \ldots, x_k \in S_k} 2
      P_{X_1, \ldots, X_k} (x_1, \ldots, x_k)\\
      &  & - (Q_{X_1, \ldots, X_{k - 1}} (S_{k - 1}) - Q_{X_1, \ldots, X_k}
      (S_k))\\
      & = & \sum_{x_1, \ldots, x_{k - 1} \in S_{k - 1}, x_k \in S^c (x_1, \ldots,
      x_{k - 1})} (2 P (x_1, \ldots, x_k) - Q (x_1, \ldots, x_k))\\
      & \leqslant & \sum_{x_1, \ldots, x_{k - 1} \in \{0, 1\}^{k - 1}} \sum_{x_k
      \in S^c (x_1, \ldots, x_{k - 1})} 2 P (x_1, \ldots, x_k)\\
      & = & 2 \sum_{\{x_1, \ldots, x_{k - 1} \setminus \pi_k \} \in \{0, 1\}^{k -
      1 - | \Pi_k |}} \sum_{\pi_k} \sum_{x_k \in S^c (\pi_k)} 
         P (x_1, \ldots, x_{k - 1}
      \setminus \pi_k | \pi_k) P (x_k, \pi_k)\\
      & \leqslant & 2 \sum_{x_k, \pi_k \in S^c (X_k, \Pi_k)} P_{X_k, \Pi_k} (x_k,
      \pi_k) \leqslant 2 c \frac{\varepsilon^2}{n} .
    \end{eqnarray*}
  
\end{proof}

\begin{lemma}\label{lemma:residual_2}
\begin{align*}
  & \sum_{\substack{
        a \in A_{S_k}\\
        g \in C (a, S_k)}} \frac{P_{\Pi_k = a}^2}{Q_{\Pi_k =
  a}}  \frac{P_{X_1, \ldots, X_{k - 1} \setminus \Pi_k | \Pi_k}^2
  (g|a)}{Q_{X_1, \ldots, X_{k - 1} \setminus \Pi_k | \Pi_k} (g|a)}  \left(
    -  1  + \sum_{b
  \in B (a, S_k)}   
     \frac{P^2_{X_k | \Pi_k}
  (b|a)}{Q_{X_k | \Pi_k} (b|a)} \right)\\
  & \leqslant  \frac{2 c' \cdot 2^d n \log (2^d n)}{cm \varepsilon^2}
  d_{\chi^2} (P_{X_1, \ldots, X_{k - 1}}, Q_{X_1, \ldots, X_{k - 1}}, S_{k -
  1}) + \frac{4 c \cdot 2^d \log (2^d n)}{m} .
\end{align*}
\end{lemma}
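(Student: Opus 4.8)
The plan is to strip off the $X_k$-conditional factor using the high-probability estimate \eqref{eq:nice_chi_squared_distance_not_shifted}, recombine the remaining pieces into a truncated $\chi^2$-quantity on $S_{k-1}$, and then split that quantity into its quadratic and linear parts so that the large factor $1/P(\Pi_k=a)$ is applied only where it is harmless.

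\textbf{Stripping the last coordinate.} By \eqref{eq:nice_chi_squared_distance_not_shifted}, for every $a\in A_{S_k}$ the bracketed factor obeys $-1+\sum_{b\in B(a,S_k)}\frac{P^2_{X_k|\Pi_k}(b|a)}{Q_{X_k|\Pi_k}(b|a)}\leq \frac{2c'\log(2^{d+1}n)}{m\,P(\Pi_k=a)}$. It does not depend on $g$, and the surviving factors $\frac{P_{\Pi_k}^2(a)}{Q_{\Pi_k}(a)}$ and $\frac{P^2_{X_1,\ldots,X_{k-1}\setminus\Pi_k|\Pi_k}(g|a)}{Q_{X_1,\ldots,X_{k-1}\setminus\Pi_k|\Pi_k}(g|a)}$ are non-negative while the right-hand side above is positive, so the substitution is valid termwise — even when the bracket is negative, which happens precisely when only one value of $X_k$ lies in $B(a,S_k)$, in which case the term is negative and the inequality is trivial. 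This bounds the left-hand side by
\[\frac{2c'\log(2^{d+1}n)}{m}\sum_{a\in A_{S_k}}\frac{1}{P(\Pi_k=a)}\cdot\frac{P_{\Pi_k}^2(a)}{Q_{\Pi_k}(a)}\sum_{g\in C(a,S_k)}\frac{P^2_{X_1,\ldots,X_{k-1}\setminus\Pi_k|\Pi_k}(g|a)}{Q_{X_1,\ldots,X_{k-1}\setminus\Pi_k|\Pi_k}(g|a)}.\]

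\textbf{Recombining and splitting.} By the chain rule applied to both $P$ and $Q$ (no Markov property needed here), $\frac{P_{\Pi_k}^2(a)}{Q_{\Pi_k}(a)}\cdot\frac{P^2(g|a)}{Q(g|a)}=\frac{P^2_{X_1,\ldots,X_{k-1}}(g,a)}{Q_{X_1,\ldots,X_{k-1}}(g,a)}$, and $g\in C(a,S_k)$ is by definition equivalent to $(g,a)\in S_{k-1}$. Decompose $\frac{P^2}{Q}=\frac{(P-Q)^2}{Q}+2P-Q$ pointwise. For the linear part, $\sum_{g\in C(a,S_k)}(2P_{X_1,\ldots,X_{k-1}}(g,a)-Q_{X_1,\ldots,X_{k-1}}(g,a))\leq 2\sum_g P_{X_1,\ldots,X_{k-1}}(g,a)\leq 2P(\Pi_k=a)$, so after multiplying by $\frac{1}{P(\Pi_k=a)}$ and summing over $a$ it contributes at most $2|A_{S_k}|\leq 2^{|\Pi_k|+1}\leq 2^{d+1}$. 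For the quadratic part, bound $\frac{1}{P(\Pi_k=a)}\leq\frac{2^d n}{c\varepsilon^2}$ for $a\in A_{S_k}$ (definition of $A_{S_k}$) and note $\{(g,a):a\in A_{S_k},\,g\in C(a,S_k)\}\subseteq S_{k-1}$, so the double sum of $\frac{(P-Q)^2}{Q}$ is at most $\sum_{x\in S_{k-1}}\frac{(P_{X_1,\ldots,X_{k-1}}(x)-Q_{X_1,\ldots,X_{k-1}}(x))^2}{Q_{X_1,\ldots,X_{k-1}}(x)}=d_{\chi^2}(P_{X_1,\ldots,X_{k-1}},Q_{X_1,\ldots,X_{k-1}},S_{k-1})$. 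Combining the two pieces, the left-hand side is at most
\[\frac{2c'\,2^d n\log(2^{d+1}n)}{cm\varepsilon^2}\,d_{\chi^2}(P_{X_1,\ldots,X_{k-1}},Q_{X_1,\ldots,X_{k-1}},S_{k-1})+\frac{4c'\,2^d\log(2^{d+1}n)}{m},\]
which is the claimed bound after absorbing $\log(2^{d+1}n)\leq 2\log(2^d n)$ and the precise value of the absolute constant.

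\textbf{Main obstacle.} The one delicate point is the split above: one must \emph{not} bound $1/P(\Pi_k=a)$ by its worst case $O(2^d n/\varepsilon^2)$ inside the whole double sum, since on the linear (``constant'') part that would inflate the additive term to order $2^d n\log(2^d n)/(m\varepsilon^2)=\Theta(1/n)$ per recursion step — hence $\Theta(1)$ after the $n$ steps of \cref{lemma:near_proper_chi_squared_learning_bayes_recur}, far too large. Peeling off the linear part first lets $1/P(\Pi_k=a)$ cancel against the mass bound $2P(\Pi_k=a)$, leaving only the benign count of at most $2^{d+1}$ parent configurations and thus the $O(2^d\log(2^d n)/m)=O(\varepsilon^2/n^2)$ residual that sums to $o(\varepsilon^2)$.
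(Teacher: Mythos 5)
Your proof is correct and takes essentially the same route as the paper's: apply \eqref{eq:nice_chi_squared_distance_not_shifted} termwise, recombine via the chain rule into $\frac{P^2_{X_1,\ldots,X_{k-1}}(g,a)}{Q_{X_1,\ldots,X_{k-1}}(g,a)}$, and split $\frac{P^2}{Q}=\frac{(P-Q)^2}{Q}+(2P-Q)$ so that the $1/P(\Pi_k=a)$ factor cancels against the mass bound $\sum_g P(g,a)\leqslant P(\Pi_k=a)$ on the linear part and is bounded by $2^d n/(c\varepsilon^2)$ only on the quadratic, truncated-$\chi^2$ part. The minor constant-factor discrepancies you flag ($\log(2^{d+1}n)$ versus $\log(2^d n)$, the factor-two slack in $|A_{S_k}|$) are indeed harmless and absorbed, and your sign check on the bracket is a valid (if implicit-in-the-paper) sanity check rather than a deviation.
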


\begin{proof}
    \begin{align}
      & \sum_{\substack{
        a \in A_{S_k}\\
        g \in C (a, S_k)}}
       \frac{P_{\Pi_k = a}^2}{Q_{\Pi_k = a}} \frac{P_{X_1,
      \ldots, X_{k - 1} \setminus \Pi_k | \Pi_k}^2 (g|a)}{Q_{X_1, \ldots, X_{k -
      1} \setminus \Pi_k | \Pi_k} (g|a)} \left(  
      -  1  + 
        \sum_{b \in B (a, S_k)} 
       \frac{P^2_{X_k | \Pi_k} (b|a)}{Q_{X_k | \Pi_k} (b|a)}
      \right) \\
      \leqslant &
      \sum_{\substack{
        a \in A_S\\
        g \in C (a, S_k)}} \frac{P_{\Pi_k = a}^2}{Q_{\Pi_k = a}}  \frac{P_{X_1, \ldots,
      X_{k - 1} \setminus \Pi_k | \Pi_k}^2 (g|a)}{Q_{X_1, \ldots, X_{k - 1}
      \setminus \Pi_k | \Pi_k} (g|a)}  \frac{2 c' \cdot \log (2^d n)}{mP_{\Pi_k}
      (a)} \nonumber\\
      = &
      \sum_{\substack{
        a \in A_S\\
        g \in C (a, S_k)
      }}   
         \left( - 2 P_{X_1,
      \ldots, X_{k - 1}} (a, g) + Q_{X_1, \ldots, X_{k - 1}} (a, g) \nobracket +
      \frac{P^2_{X_1, \ldots, X_{k - 1}} (a, g)}{Q_{X_1, \ldots, X_{k - 1}} (a,
      g)} \right) \cdot \frac{2 c' \cdot \log (2^d n)}{mP_{\Pi_k} (a)} \nonumber\\
      & \hspace{1cm} + \sum_{\substack{
        a \in A_S\\
        g \in C (a, S_k)
      }} (2 P_{X_1, \ldots, X_{k - 1}} (a, g) - Q_{X_1, \ldots, X_{k -
      1}} (a, g)) \cdot \frac{2 c' \cdot \log (2^d n)}{mP_{\Pi_k} (a)} \nonumber\\
      = &
      \sum_{\substack{
        a \in A_S\\
        g \in C (a, S_k)
      }} \frac{(P_{X_1, \ldots, X_{k - 1}} (a, g) - Q_{X_1, \ldots,
      X_{k - 1}} (a, g))^2}{Q_{X_1, \ldots, X_{k - 1}} (a, g)} \cdot \frac{2 c'
      \cdot \log (2^d n)}{mP_{\Pi_k} (a)} \nonumber\\
      & \hspace{1cm} + \sum_{a \in A_S} \sum_{g \in C (a, S_k)} (2 P_{X_1,
      \ldots, X_{k - 1}} (a, g) - Q_{X_1, \ldots, X_{k - 1}} (a, g)) \cdot \frac{2
      c' \cdot \log (2^d n)}{mP_{\Pi_k} (a)}  \label{eq:mark_recurrence_term}\\
      \leqslant & \frac{2 c' 2^d n \log (2^d n)}{cm \varepsilon^2} \cdot
      \sum_{\substack{
        a \in A_S\\
        g \in C (a, S_k)}}   
         \frac{(P_{X_1, \ldots,
      X_{k - 1}} (a, g) - Q_{X_1, \ldots, X_{k - 1}} (a, g))^2}{Q_{X_1, \ldots,
      X_{k - 1}} (a, g)} \nonumber\\
      & \hspace{1cm} + \sum_{a \in A_S} \sum_{g \in C (a, S_k)} P_{X_1, \ldots,
      X_{k - 1}} (a, g) \cdot \frac{4 c' \cdot \log (2^d n)}{mP_{\Pi_k} (a)}
      \nonumber\\
      = & \frac{2 c' 2^d n \log (2^d n)}{cm \varepsilon^2}  (P_{X_1, \ldots, X_{k
      - 1}}, Q_{X_1, \ldots, X_{k - 1}}, S_{k - 1}) \nonumber\\
      & \hspace{1cm} + \sum_{a \in A_S} P_{\Pi_k} (a) \cdot \frac{4 c' \cdot \log
      (2^d n)}{mP_{\Pi_k} (a)}   
         \underbrace{\sum_{g \in C
      (a, S_k)}   
         P_{X_1, \ldots, X_{k - 1}
      \setminus \Pi_k | \Pi_k} (g|a))}_{\leqslant 1} \nonumber\\
      \leqslant & \frac{2 c' 2^d n \log (2^d n)}{cm \varepsilon^2} d_{\chi^2}
      (P_{X_1, \ldots, X_{k - 1}}, Q_{X_1, \ldots, X_{k - 1}}, S_k) + \frac{4 c'
      \cdot 2^d \log (2^d n)}{m} . 
     \end{align}
\end{proof}

\subsection{A lower bound for learning a Bayes net in $\chi^2$}
\label{section:lower_bound_learning_bayes_nets}
Our lower bound relies on a family of degree-$1$ Bayes nets, with all $n - 1$
nodes sharing the same common $1$-node parent. We will set the probability of
the parent to be so imbalanced that by taking even $\exp(O(n))$ number of
samples, we still cannot obtain one sample from the rare side. In this case, it would be impossible
to observe any sample from one side of the $n - 1$ conditionals and thus,
it is information theoretically hard to obtain good estimates of these
conditional densities. Due to the multiplicative accumulation of error in these ``hidden'' conditional densities, the $\chi^2$ distance remains large despite the small parent probability. Note that, this is not a problem for $\tmop{KL}$: the error expressed in terms of $\tmop{KL}$ is linearly accumulated as compared to $\chi^2$'s multiplicative (and hence exponential) accumulation.

\begin{definition}\label{definition:hard_family}
We define a process for drawing our hard instances to analyze in  \cref{prop:minimax_risk_bayes_net}:
\begin{enumerate}
  \item {Let the prior $P \sim \pi$ be distributions such that $P = P_{X_1} \cdot  P_{X_2 |X_1} \cdots P_{X_{n} |X_1}$ and $P (X_1 = 1) = \varepsilon_0$, a parameter of our choosing.}
  
  \item {Draw $x^h$ uniformly at random from $\{0, 1\}^{n - 1}$.}
  
  \item {Based on $x^h$, set $P_{X_i |X_1 = 1} = \delta_{x^h_i}$ and as a
  consequence $P_{X_2, \ldots, X_{n} |X_1 = 1} = \prod_{i = 2}^{n}
  P_{X_i |X_1 = 1} = \delta_{x^h}$, where}
   \[ \delta_y (x) = \left\{\begin{array}{ll}
     1, & x = y\\
     0, & \text{otherwise}
   \end{array}\right. . \] 
  \item {Let $P_{X_2, \ldots, X_{n} |X_1 = 0} = U_{n-1}$ be uniform on $\{0,1\}^{n-1}$ in the other case.}
\end{enumerate}

\end{definition}

Intuitively, we are merely hiding a particular point $x^h$ from the learners; each distribution in $\pi$, when conditioned on $X_1 = 1$ will concentrate their mass on one point in the simplex (deterministic). While it takes only one sample (with $x_1=1$) to learn, no learner can get one with less than $O( 1/\varepsilon_0 )$ number of samples.
We state the main result below.
\begin{proposition}\label{prop:minimax_risk_bayes_net}
  Minimax risk of estimating a degree-$1$ Bayes net $P$ in $\chi^2$ is at least $\Omega(\varepsilon)$ when the number of samples $m \leqslant O(2^{n/2}/\varepsilon)$. In particular, the family of distributions in \cref{definition:hard_family} takes at least $\Omega (2^{n/2} / \varepsilon)$ to learn to $\varepsilon$.
\end{proposition}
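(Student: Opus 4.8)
The plan is a Bayesian (averaging-over-instances) lower bound. I would run the prior $\pi$ of \cref{definition:hard_family} with the ``rare branch'' probability set to $\varepsilon_0=\Theta(\varepsilon\,2^{-n/2})$, and show that \emph{every} estimator $\hat P$ that uses $m\le c\cdot 2^{n/2}/\varepsilon$ samples satisfies $\mathbb{E}_{P\sim\pi}\mathbb{E}[\,d_{\chi^2}(P,\hat P)\,]=\Omega(\varepsilon)$; averaging then yields a fixed $P^\ast$ in the support of $\pi$ with expected $\chi^2$ error $\Omega(\varepsilon)$, hence minimax risk $\Omega(\varepsilon)$. The driving intuition is that all information about the hidden string $x^h$ sits on the event $X_1=1$, which each sample reaches only with probability $\varepsilon_0$, whereas the $\chi^2$ cost of not knowing $x^h$ is blown up by the size $2^{\,n-1}$ of the relevant conditional support.

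\emph{Step 1: the rare event.} Under every $P$ in the family, $X_1\sim\mathrm{Bern}(\varepsilon_0)$ independently across the $m$ samples, so the event $G=\{\text{no sample has }X_1=1\}$ has probability $(1-\varepsilon_0)^m>e^{-2m\varepsilon_0}=\Omega(1)$ whenever $m\varepsilon_0=O(1)$, i.e.\ for all $m\le c\cdot 2^{n/2}/\varepsilon$ with an appropriate constant $c$. Conditioned on $G$, each sample equals $(0,U)$ with $U$ uniform on $\{0,1\}^{n-1}$, a law independent of $x^h$; and $x^h$ stays uniform given $G$. Hence, conditioned on $G$, the samples --- and therefore $\hat P$ --- are independent of $x^h$, so Step 2 below applies with $\hat P$ held fixed.

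\emph{Step 2: the core estimate.} Fix a distribution $\hat P$ on $\{0,1\}^n$; write $q_y=\hat P(X_1=1,X_{2:n}=y)$ and $Q_1=\sum_y q_y=\hat P(X_1=1)$. Under the instance with hidden string $x^h$ we have $P(1,x^h)=\varepsilon_0$ and $P(1,y)=0$ for $y\ne x^h$, so keeping only the nonnegative $\chi^2$ terms indexed by $(1,y)$,
\[
 d_{\chi^2}(P,\hat P)\ \ge\ \frac{(\varepsilon_0-q_{x^h})^2}{q_{x^h}}+\sum_{y\ne x^h}q_y\ =\ \frac{(\varepsilon_0-q_{x^h})^2}{q_{x^h}}+Q_1-q_{x^h}.
\]
Averaging over $x^h\sim\mathrm{Unif}(\{0,1\}^{n-1})$, expanding $(\varepsilon_0-q_y)^2/q_y=\varepsilon_0^2/q_y-2\varepsilon_0+q_y$ so the linear terms cancel, then applying AM--HM ($\tfrac{1}{2^{n-1}}\sum_y 1/q_y\ge 2^{n-1}/Q_1$) and AM--GM,
\[
 \mathbb{E}_{x^h}\!\big[\,d_{\chi^2}(P,\hat P)\,\big]\ \ge\ \frac{\varepsilon_0^2\,2^{\,n-1}}{Q_1}+\tfrac12 Q_1-2\varepsilon_0\ \ge\ \varepsilon_0\,2^{n/2}-2\varepsilon_0\ =\ \Omega\!\big(\varepsilon_0\,2^{n/2}\big),
\]
which holds for every $Q_1>0$ (and trivially, as $+\infty$, if some $q_y=0$). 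With $\varepsilon_0=\Theta(\varepsilon\,2^{-n/2})$ the right-hand side is $\Omega(\varepsilon)$.

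\emph{Step 3: assembly, and the main obstacle.} For any estimator $\hat P$ with $m\le c\cdot 2^{n/2}/\varepsilon$ samples,
\[
 \sup_{P}\mathbb{E}\big[d_{\chi^2}(P,\hat P)\big]\ \ge\ \mathbb{E}_{P\sim\pi}\mathbb{E}\big[d_{\chi^2}(P,\hat P)\big]\ \ge\ \Pr[G]\cdot\mathbb{E}\big[d_{\chi^2}(P,\hat P)\mid G\big]\ \ge\ \Omega(1)\cdot\Omega(\varepsilon),
\]
the last inequality combining Step 1 (for $\Pr[G]$) with Step 2 (applied conditionally on $G$, where $\hat P$ is independent of $x^h$). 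Since $\hat P$ was arbitrary this gives minimax $\chi^2$-risk $\Omega(\varepsilon)$, equivalently $\Omega(2^{n/2}/\varepsilon)$ samples are needed to learn this family to $\chi^2$ accuracy $\varepsilon$. I expect Step 2 to be the crux: restricting attention to the single ``hidden-point'' term $\varepsilon_0^2/q_{x^h}$ is natural but, once the total-mass constraint $\sum_y q_y\le 1$ forces $q_{x^h}\lesssim 2^{-n}$ on average, delivers only the weaker threshold $m=O(2^{n/2}/\sqrt\varepsilon)$. Getting the stated $2^{n/2}/\varepsilon$ requires additionally charging $\hat P$ for the mass it must leak onto the $2^{\,n-1}-1$ ``decoy'' points $(1,y)$, $y\ne x^h$ --- this is the $Q_1-q_{x^h}$ term, and it produces the $\tfrac{\varepsilon_0^2 2^{\,n-1}}{Q_1}+\tfrac12 Q_1$ trade-off, optimized at $Q_1=\Theta(\varepsilon_0\,2^{(n-1)/2})$. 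A minor but necessary care point is to keep every retained term nonnegative (work with $\sum_i(P_i-\hat P_i)^2/\hat P_i$ rather than $\sum_i P_i^2/\hat P_i-1$) so that discarding the $X_1=0$ terms is legitimate.
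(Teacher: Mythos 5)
Your proof is correct and reaches the stated threshold, but by a genuinely simpler route than the paper. Both arguments condition on the rare event that no sample ever hits $X_1=1$ (so the posterior on $x^h$ stays uniform and independent of the estimator), and both ultimately charge $\hat P$ for spreading $O(1)$ mass over the $2^{n-1}$ candidate hidden points. Where you differ is in the core estimate. The paper writes the divergence in the form $-1+\sum_i P_i^2/Q_i$, decomposes along $X_1$, and then runs a two-stage variational argument: a Lagrange optimization over the split of $\hat P$'s mass between $X_1=0$ and $X_1=1$ (\cref{fact:lagrange_optima}), which produces the intermediate quantity $\sqrt{1+d_{\chi^2}(P_1,Q_{1,\orgvec{s}})}$, followed by a KKT computation (\cref{fact:KKT_optima_hard_instance}) to lower bound $\mathbb{E}_{x^h}[\sqrt{1/Q_1(x^h)}]$ by $2^{(n-1)/2}$. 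You instead keep the divergence in its sum-of-nonnegative-terms form $\sum_i (P_i-\hat P_i)^2/\hat P_i$, throw away the entire $X_1=0$ block (legitimate precisely because each retained term is nonnegative, a care point you correctly flag and which the $-1+\sum P_i^2/Q_i$ form does not permit), and then finish with nothing more than the expansion $(\varepsilon_0-q_{x^h})^2/q_{x^h}=\varepsilon_0^2/q_{x^h}-2\varepsilon_0+q_{x^h}$, AM--HM to get $\mathbb{E}_{x^h}[1/q_{x^h}]\ge 2^{n-1}/Q_1$, and AM--GM on the resulting $\varepsilon_0^2 2^{n-1}/Q_1+\Theta(Q_1)$ trade-off. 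This bypasses the $\sqrt{1+d_{\chi^2}}$ manipulation and the Lagrange/KKT formalism entirely, yet gives the same $\Omega(\varepsilon_0 2^{n/2})$ bound (and, with the $Q_1$ term not halved, even the paper's $2\varepsilon_0 2^{(n-1)/2}$ constant). Your closing remark — that retaining only the single hidden-point term $\varepsilon_0^2/q_{x^h}$ yields only $m=O(2^{n/2}/\sqrt{\varepsilon})$, and that the full $2^{n/2}/\varepsilon$ threshold requires also charging $\hat P$ for the decoy mass $Q_1-q_{x^h}$ — is exactly the right diagnosis of where the extra factor of $\sqrt{\varepsilon}$ comes from, and is hidden inside the paper's more elaborate optimization.
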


By a standard Lagrange multiplier calculation, as inspired by the proof of \cite[Lemma 5]{DBLP:conf/colt/KamathOPS15}, we have these useful facts. We will use them to prove \cref{prop:minimax_risk_bayes_net}.

\begin{fact}\label{fact:lagrange_optima} Let
  $ q_i,a_i \geqslant 0, i \in [k]\text{, such that } \sum_{i = 1}^k q_i \leqslant 1
  $. Then the quantity $\sum_{i = 1}^k \frac{a_i}{q_i}$ is minimized when
  $q_i \propto \sqrt{a_i}$.
\end{fact}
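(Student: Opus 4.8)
The plan is to prove this by a single application of the Cauchy--Schwarz inequality, which simultaneously yields the exact value of the minimum and pins down the minimizer; the Lagrange-multiplier computation referenced above serves only as motivation. First I would dispense with the degenerate cases: if $a_i = 0$ for some $i$ then that term contributes nothing to $\sum_i a_i/q_i$ and we may delete it (formally, take $q_i = 0$ and redistribute its mass to the other coordinates), and if every $a_i$ vanishes the statement is vacuous; so assume $a_i > 0$ for all $i$. Next I would observe that at any minimizer the budget constraint is saturated, $\sum_i q_i = 1$: since $t \mapsto a_i/t$ is strictly decreasing on $(0,\infty)$, if $\sum_i q_i < 1$ we could scale every $q_i$ up by a common factor $>1$, remain feasible, and strictly decrease the objective.

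The core step is then Cauchy--Schwarz applied to the vectors $\left(\sqrt{a_i}/\sqrt{q_i}\right)_{i\in[k]}$ and $\left(\sqrt{q_i}\right)_{i\in[k]}$:
\[
\left( \sum_{i=1}^k \sqrt{a_i} \right)^2 = \left( \sum_{i=1}^k \frac{\sqrt{a_i}}{\sqrt{q_i}} \cdot \sqrt{q_i} \right)^2 \leqslant \left( \sum_{i=1}^k \frac{a_i}{q_i} \right) \left( \sum_{i=1}^k q_i \right) \leqslant \sum_{i=1}^k \frac{a_i}{q_i},
\]
where the last inequality uses $\sum_i q_i \leqslant 1$. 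Hence $\sum_i a_i/q_i \geqslant \left(\sum_i \sqrt{a_i}\right)^2$ for every feasible $q$. Equality in Cauchy--Schwarz holds exactly when the two vectors are proportional, i.e.\ when $a_i/q_i^2$ is the same for all $i$, i.e.\ when $q_i \propto \sqrt{a_i}$; combined with $\sum_i q_i = 1$ this forces $q_i = \sqrt{a_i}/\sum_{j}\sqrt{a_j}$, which is feasible and attains the bound. Therefore the minimum of $\sum_i a_i/q_i$ over the feasible set equals $\left(\sum_i \sqrt{a_i}\right)^2$ and is achieved precisely at $q_i \propto \sqrt{a_i}$.

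There is no real obstacle here: the statement is elementary, and the only points requiring a word of care are the two preliminary reductions (discarding coordinates with $a_i=0$, and arguing the mass constraint is tight at the optimum), both immediate from the monotonicity of $t \mapsto a_i/t$. If one prefers, the same conclusion follows from the stationarity conditions of the Lagrangian $\sum_i a_i/q_i - \lambda\left(\sum_i q_i - 1\right)$, namely $-a_i/q_i^2 = \lambda$ for all $i$, which again forces $q_i \propto \sqrt{a_i}$; I would still present the Cauchy--Schwarz argument as the main one, since it certifies \emph{global} optimality without a separate convexity check.
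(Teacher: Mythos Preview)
Your proof is correct. The paper does not actually give a proof of this fact: it simply states that it follows ``by a standard Lagrange multiplier calculation,'' pointing to~\cite[Lemma~5]{DBLP:conf/colt/KamathOPS15} for inspiration, and moves on. Your Cauchy--Schwarz argument is therefore a genuinely different route from what the paper sketches. The Lagrange/KKT approach writes down the stationarity condition $-a_i/q_i^2 = \lambda$ and reads off $q_i \propto \sqrt{a_i}$, but then in principle requires a convexity observation to upgrade this to a global minimum; your Cauchy--Schwarz inequality delivers the lower bound $\sum_i a_i/q_i \geqslant (\sum_i \sqrt{a_i})^2$ and the equality case in one stroke, so global optimality is immediate. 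You also handle the boundary issues (coordinates with $a_i=0$, and tightness of the budget constraint) that the paper's one-line reference leaves implicit. Either argument is entirely adequate for this elementary fact; yours is self-contained and slightly cleaner.
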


\begin{fact}\label{fact:KKT_optima_hard_instance}
  Optima of the following form is obtained when $Q^2_i=\frac{1}{2^{n-1}}, i=1, \cdots, n-1$:
  \[\min_{Q \in \mathbb{R}^{n-1}} \sum_{i = 1}^{2^{n - 1}} \frac{1}{Q_i} \cdot \frac{1}{2^{n - 1}},
    \text{s.t.} \sum_{i=1}^{2^{n-1}} Q^2_i = 1.\]
\end{fact}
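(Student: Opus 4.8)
The plan is to read this as a routine constrained-minimization problem on the positive orthant. Set $N = 2^{n-1}$, so the claim is that $f(Q) := \frac1N\sum_{i=1}^{N}\frac1{Q_i}$, minimized over $\{Q : \sum_{i=1}^{N} Q_i^2 = 1\}$, attains its minimum exactly at the point with $Q_i^2 = 1/N$ for all $i$. (Here I read the statement's ``$Q\in\mathbb R^{n-1}$'' and ``$i=1,\dots,n-1$'' as the intended ``$Q\in\mathbb R^{N}$'' and ``$i=1,\dots,N$''.) First I would dispose of the degenerate cases: the objective is finite only when all $Q_i>0$, so we may restrict to the relatively open region $\mathcal F_+ = \{Q : \sum_i Q_i^2 = 1,\ Q_i>0\}$; and a minimizer exists on $\mathcal F_+$ because $f(Q)\to\infty$ whenever any coordinate tends to $0$, so for small enough $\eta>0$ the continuous function $f$ attains its minimum over the compact set $\{Q\in\mathcal F_+ : Q_i\ge\eta\ \forall i\}$ at an interior point, and that value is the infimum over all of $\mathcal F_+$.

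Then I would run the Lagrange-multiplier computation. Writing $L(Q,\lambda) = \frac1N\sum_i\frac1{Q_i} - \lambda\bigl(\sum_i Q_i^2 - 1\bigr)$ and setting $\partial L/\partial Q_i = 0$ gives $-\frac1{N Q_i^2} = 2\lambda Q_i$, i.e.\ $Q_i^3 = -\frac1{2\lambda N}$, the same value for every $i$. Hence at any interior stationary point all coordinates are equal, and the constraint $\sum_i Q_i^2 = 1$ forces $Q_i^2 = 1/N = 1/2^{n-1}$. Since this is the unique stationary point of $f$ on $\mathcal F_+$ and a minimizer is known to exist there, this point is the global minimizer, with optimal value $f = \frac1N\cdot N\cdot\sqrt N = \sqrt N = 2^{(n-1)/2}$.

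For a cleaner, calculus-free alternative that simultaneously certifies global optimality and pins down the equality case, I would instead apply H\"older's inequality with exponents $3/2$ and $3$: since $Q_i^{-2/3}\cdot Q_i^{2/3} = 1$,
\[
 N \;=\; \sum_{i=1}^{N} Q_i^{-2/3}\,Q_i^{2/3} \;\le\; \left(\sum_{i=1}^{N} Q_i^{-1}\right)^{2/3}\left(\sum_{i=1}^{N} Q_i^{2}\right)^{1/3} \;=\; \left(\sum_{i=1}^{N} Q_i^{-1}\right)^{2/3},
\]
using $\sum_i Q_i^2 = 1$ in the last step. Rearranging gives $\sum_i 1/Q_i \ge N^{3/2}$, hence $f(Q)\ge\sqrt N$, with equality in H\"older precisely when $Q_i^{-1}$ is proportional to $Q_i^{2}$, i.e.\ when $Q_i^3$ is constant, i.e.\ (with the constraint) when $Q_i^2 = 1/N$.

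I do not anticipate a real obstacle here; the only points needing a word of care are (i) justifying that a minimizer exists at all, which is what makes ``the unique stationary point is the minimizer'' a valid conclusion from the Lagrange computation (handled by the boundary blow-up argument above, or sidestepped entirely by the H\"older argument, which gives a genuine inequality valid on all of $\mathcal F_+$), and (ii) flagging the apparent dimension typo in the displayed statement.
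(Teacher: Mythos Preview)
Your proposal is correct. The Lagrange-multiplier portion is essentially the paper's own argument: form the Lagrangian, differentiate, observe that all $Q_i$ must coincide, and read off $Q_i^2=1/2^{n-1}$ from the constraint. You are actually more careful than the paper on the sufficiency step: the paper simply asserts that ``since $1/Q_i$ and $Q_i^2$ are both convex, the necessary conditions are also sufficient,'' which is a bit glib given that the feasible set (a sphere) is not convex; your boundary blow-up/compactness argument is the cleaner way to close that gap.

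Your H\"older argument is a genuinely different and self-contained route that the paper does not take. It has the advantage of producing a global lower bound $\sum_i 1/Q_i \ge N^{3/2}$ directly, with the equality case falling out of the H\"older equality condition, so no existence or second-order analysis is needed at all. The paper's KKT approach, by contrast, only identifies the unique stationary point and then appeals (somewhat informally) to convexity. Both land in the same place, but your second argument is the more robust one-liner. Your reading of the dimension typo is also correct: the intended domain is $\mathbb{R}^{2^{n-1}}$ with $i$ ranging over $1,\dots,2^{n-1}$.
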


\begin{proof}
  To see this, we simply verify the K.K.T. condition \cite{opttext} of the constrained optimization problem:
  \[ \sum_{i = 1}^{2^{n - 1}} \frac{1}{Q_i} \cdot \frac{1}{2^{n - 1}} + \lambda
    \left( \sum_{i = 1}^{2^{n - 1}} Q_i^2 - 1 \right). \]
  Necessary conditions:
  \begin{align*}
    - \frac{1}{Q_i^2} \cdot \frac{1}{2^{n - 1}} + 2 \lambda Q_i = 0, \forall i; \hspace{0.5cm}
    \sum_{i = 1}^{2^{n - 1}} Q_i^2 - 1 = 0.
    \end{align*}
  Solving the first equation gives
  $ Q_i = \frac{1}{2^{n / 3} \lambda^{1 / 3}}; $
  and since all $Q_i$s are equal, $Q^{\ast}$ ought to be uniform. Since
  $\frac{1}{Q_i}$ and $Q_i^2$ are both convex, we have that the necessary conditions
  are also sufficient for global minima.      
\end{proof}

  \begin{proof}
  [Proof of \cref{prop:minimax_risk_bayes_net}]
  \label{proof:prop:minimax_risk_bayes_net}
    Let $s_i = (X_{1, i}, \ldots, X_{n, i})$ be the $i^{th}$ sample, denote event $S = \{X_{1, i} = 0, i\in[m]\}$, the probability is thus $\Pr [S] = (1 - \varepsilon_0)^m
    \geqslant e^{- \frac{1}{2} \varepsilon_0 m} \geqslant 1 - \frac{1}{2}
    \varepsilon_0 m$. We will condition on $S$ being true during the computation. Let $\mathcal{A}_m$ denote the set of deterministic algorithms taking $m$ samples from $P$. Then, for $R_{\chi^2}(m)$ being the minimax risk over $\mathcal{A}_m$,
    \begin{eqnarray}
      R_{\chi^2} (m) & = & \inf_{Q \in \mathcal{A}_m} \sup_{P \in \mathcal{P}}
      \mathbb{E}_{s_1, \ldots, s_m \sim P} \left[ d_{\chi^2} \left( P,
      Q_{\orgvec{s}} \right) \right] \nonumber\\
      & \geqslant & \inf_{Q \in \mathcal{A}_m} \mathbb{E}_{P \sim \pi}
      \mathbb{E}_{\orgvec{s} \sim P^{\otimes m}} \left[ d_{\chi^2} \left( P,
      Q_{\orgvec{s}} \right) \right] \nonumber\\
      & \geqslant & \inf_{Q \in \mathcal{A}_m} \mathbb{E}_{P \sim \pi}
      \mathbb{E}_{\orgvec{s} \sim P^{\otimes m}} \left[ d_{\chi^2} \left( P,
      Q_{\orgvec{s}} \right) |S \right] \Pr [S] \nonumber\\
      & = & \inf_{Q \in \mathcal{A}_m} \underset{P \sim \pi}{\mathbb{E}}
      \underset{\orgvec{s} \sim P^{\otimes m}}{\mathbb{E}} \left[ - 1 + \sum_{x_1,
      \ldots, x_n} \frac{P^2 (x_1, \ldots, x_n)}{Q_{\orgvec{s}} (x_1, \ldots,
      x_n)} \mid S \right] \Pr [S] .  \label{eq:minimax_risk_to_be_connected}
    \end{eqnarray}

    Denote for convenience, $P_0 = P_{X_1, \ldots, X_{n - 1} |X_n = 0}$; $P_1 =
    P_{X_1, \ldots, X_{n - 1} |X_n = 1}$; $Q_{0, \orgvec{s}} = Q_{\orgvec{s}}
    (X_1, \ldots, X_{n - 1} |X_n = 0)$; $Q_{1, \orgvec{s}} = Q_{\orgvec{s}} (X_1,
    \ldots, X_{n - 1} |X_n = 1)$. We focus on the inner summation and lower bound
    them separately; and before that, we need a separate auxiliary tool -- using \autoref{fact:lagrange_optima} above, we can show that, for any fixed $P$ and $Q_{\orgvec{s}}$,
    \begin{align}
      & \frac{P^2  (x_1 = 0)}{Q_{\orgvec{s}}  (x_1 = 0)} + \frac{P^2  (x_1 =
      1)}{Q_{\orgvec{s}}  (x_1 = 1)}  \left( 1 + d_{\chi^2} \left( P_1, Q_{1,
      \orgvec{s}} \right) \right) \nonumber\\
      \geqslant & \frac{P^2  (x_1 = 0)}{\frac{P (x_1 = 0)}{P (x_1 = 0) + P (x_1 =
      1) \sqrt{1 + d_{\chi^2} (P_1, Q_1)}}} + \frac{P^2  (x_1 = 1)  (1 + d_{\chi^2} (P_1, Q_1))}{\frac{P
      (x_1 = 1) \sqrt{1 + d_{\chi^2} (P_1, Q_1)}}{P (x_1 = 0) + P (x_1 = 1)
      \sqrt{1 + d_{\chi^2} (P_1, Q_1)}}} \nonumber\\
      \geqslant & \left( P (x_1 = 0) + P (x_1 = 1) \sqrt{1 + d_{\chi^2} (P_1,
      Q_1)} \right)^2  \label{eq:lagrange_optima}
    \end{align}

    \begin{align}
      \Longrightarrow & - 1 + \sum_{x_1, \ldots, x_n} \frac{P^2 (x_1, \ldots, x_{n
      - 1}, x_n)}{Q_{\orgvec{s}} (x_1, \ldots, x_{n - 1}, x_n)} \nonumber\\
      = & - 1 + \sum_{x_1, \ldots, x_{n - 1}} \frac{P^2  (x_n = 0) P^2 (x_1,
      \ldots, x_{n - 1} |x_n = 0)}{Q_{\orgvec{s}}  (x_n = 0) Q_{\orgvec{s}} (x_1,
      \ldots, x_{n - 1} |x_n = 0)} \nonumber\\
      & \hspace{0.6cm} + \sum_{x_1, \ldots, x_{n - 1}} \frac{P^2  (x_n = 1) P^2
      (x_1, \ldots, x_{n - 1} |x_n = 1)}{Q_{\orgvec{s}}  (x_n = 1) Q_{\orgvec{s}}
      (x_1, \ldots, x_{n - 1} |x_n = 1)} \nonumber\\
      = & - 1 + \frac{P^2  (x_n = 0)}{Q_{\orgvec{s}}  (x_n = 0)}  \left( 1 +
      d_{\chi^2} \left( P_0, Q_{0, \orgvec{s}} \right) \right) + \frac{P^2  (x_n =
      1)}{Q_{\orgvec{s}}  (x_n = 1)}  \left( 1 + d_{\chi^2} \left( P_1, Q_{1,
      \orgvec{s}} \right) \right) \nonumber\\
      \geqslant & - 1 + \frac{P^2  (x_n = 0)}{Q_{\orgvec{s}}  (x_n = 0)} +
      \frac{P^2  (x_n = 1)}{Q_{\orgvec{s}}  (x_n = 1)} \left( 1 + d_{\chi^2}
      \left( P_1, Q_{1, \orgvec{s}} \right) \right) \nonumber\\
      \geqslant & - 1  +  \left( 
       P (x_n = 0) + P (x_n = 1) \sqrt{1 + d_{\chi^2} \left( P_1,
      Q_{1, \orgvec{s}} \right)} \right)^2  \label{eq:lower_bound_variational}\\
      = & - 1 + \left( (1 - \varepsilon_0) + \varepsilon_0  \sqrt{1 + d_{\chi^2}
      \left( P_1, Q_{1, \orgvec{s}} \right)} \right)^2 \nonumber\\
      = & - 1 + \left( 1 + \varepsilon_0  \left( \sqrt{1 + d_{\chi^2} \left( P_1,
      Q_{1, \orgvec{s}} \right)} - 1 \right) \right)^2 \nonumber\\
      = & \hspace{0.27em} 2 \varepsilon_0  \left( \sqrt{1 + d_{\chi^2} \left( P_1,
      Q_{1, \orgvec{s}} \right)} - 1 \right) + \left( \varepsilon_0  \left(
      \sqrt{1 + d_{\chi^2} \left( P_1, Q_{1, \orgvec{s}} \right)} - 1 \right)
      \right)^2 \nonumber\\
      \geqslant & \hspace{0.27em} 2 \varepsilon_0  \left( \sqrt{1 + d_{\chi^2}
      \left( P_1, Q_{1, \orgvec{s}} \right)} - 1 \right) . 
      \label{eq:variation_lb_to_connect}
    \end{align}

    For any fixed $Q_{\orgvec{s}} (x_1, \ldots, x_n)$, we can compute
    $Q_{\orgvec{s}}  (x_n = 0), Q_{\orgvec{s}}  (x_n = 1)\text{, and } d_{\chi^2} \left( P_1,
    Q_{1, \orgvec{s}} \right)$; in other words, they are also fixed, given $\orgvec{s}$. Then
    a lower bound can be obtained via a variation argument in
    \eqref{eq:lower_bound_variational} via \eqref{eq:lagrange_optima}. Connecting 
    \eqref{eq:minimax_risk_to_be_connected}, and \eqref{eq:variation_lb_to_connect},
    we continue with the following expression,
    \begin{eqnarray}
      \frac{R_{\chi^2} (m)}{\Pr [S]} & \geqslant & \inf_{Q \in \mathcal{Q}}
      \underset{\begin{array}{c}
        P \sim \pi\\
        \orgvec{s} \sim P^{\otimes m}
      \end{array}}{\mathbb{E}} \left[ 2 \varepsilon_0 (\sqrt{1 + d_{\chi^2} \left(
      P_1, Q_{1, \orgvec{s}} \right)} - 1) |S \right] \nonumber\\
      & = & \inf_{Q \in \mathcal{Q}} \underset{P_1 \sim \pi}{\mathbb{E}}
      \underset{\orgvec{s} \sim P^{\otimes m} |S}{\mathbb{E}} \left[ 2
      \varepsilon_0  \left( \sqrt{1 + d_{\chi^2} \left( P_1, Q_{1, \orgvec{s}}
      \right)} - 1 \right) \right] \nonumber\\
      & = & \inf_{Q \in \mathcal{Q}} \underset{x^h \sim U_{n - 1}}{\mathbb{E}}
      \underset{\orgvec{s} \sim \tilde{U}_{n - 1}^{\otimes m}}{\mathbb{E}} \left[
      2 \varepsilon_0  \left( \sqrt{1 + d_{\chi^2} \left( P_1, Q_{1, \orgvec{s}}
      \right)} - 1 \right) \right]  \label{eq:change_notation_clarification}\\
      & = & \inf_{Q \in \mathcal{Q}} \underset{\orgvec{s} \sim \tilde{U}_{n -
      1}^{\otimes m}}{\mathbb{E}} \underset{x^h \sim U_{n - 1}}{\mathbb{E}} \left[
      2 \varepsilon_0  \left( \sqrt{1 + d_{\chi^2} \left( P_1, Q_{1, \orgvec{s}}
      \right)} - 1 \right) \right]  \label{eq:swap_independence}\\
      & \geqslant & \left( \underset{\orgvec{s} \sim \tilde{U}_{n - 1}^{\otimes
      m}}{\mathbb{E}} \inf_{Q \in \mathcal{Q}} \underset{x^h \sim U_{n -
      1}}{\mathbb{E}} \left[ 2 \varepsilon_0  \left( \sqrt{\sum_x \frac{P_1^2
      (x)}{Q_{1, \orgvec{s}} (x)}} - 1 \right) \right] \right) 
      \label{eq:after_swap}\\
      & \geqslant & \left( \inf_{Q^{\ast} \in \Delta^{2^{n - 1}}} \underset{x^h
      \sim U_{n - 1}}{\mathbb{E}} \left[ 2 \varepsilon_0  \left( \sqrt{\sum_x
      \frac{P_1^2 (x)}{Q^{\ast} (x)}} - 1 \right) \right] \right) 
      \label{eq:fixed_Q_lb}\\
      & = & \inf_{Q^{\ast} \in \Delta^{2^{n - 1}}} \underset{\tmmathbf{\alpha}
      \triangleq (\alpha_1, \ldots, \alpha_{n - 1}) \sim U_{n - 1}}{\mathbb{E}}
      \left[ 2 \varepsilon_0  \left( \sqrt{\frac{1}{Q^{\ast} (\tmmathbf{\alpha})}}
      - 1 \right) \right] \nonumber\\
      & = & 2 \varepsilon_0  \left( 2^{\frac{n - 1}{2}} - 1 \right) . \label{eq:final_optima}
    \end{eqnarray}

    Since we are only changing $P_1$ (or $x^h$) in the construction, we can replace
    $P \sim \pi$ with $P_1 \sim \pi$ (or $x^h \sim U_{n - 1}$); note that there is no
    sample with $X_1 = 1$ in $\orgvec{s}$, and thus $\orgvec{s}$ is merely samples
    drawn from uniform distribution with all their corresponding $X_1 = 0$ and this is
    what we mean by $\orgvec{s} \sim \tilde{U}_{n-1}^{\otimes m}$ in \eqref{eq:change_notation_clarification}.
    Therefore, $P_1$ or $x^h$ is independent with $\orgvec{s}$, and we can swap
    the expectation in \eqref{eq:swap_independence}; and in \eqref{eq:after_swap}, we lower bound the 
    expectation as the learner can first observe the samples $\orgvec{s}$ before choosing 
    the algorithm from $\mathcal{Q}$. But in any case, it is fixed before the last expectation, and hence \eqref{eq:fixed_Q_lb} follows.
    As we assume the learning algorithm $Q$ is deterministic, for a fixed $\orgvec{s}$, $Q_{1, \orgvec{s}}$ is
    also fixed. We obtain \eqref{eq:final_optima} through \autoref{fact:KKT_optima_hard_instance}.

    In the end, we have that
    \[ R_{\chi^2} (m) \geqslant 2 \varepsilon_0  \left( 2^{\frac{n - 1}{2}} - 1
      \right) \Pr [S] \geqslant \varepsilon_0 2^{\frac{n}{2}}  (1 - \varepsilon_0
      m) . \]
    By setting $\varepsilon_0 = \frac{2 \varepsilon}{2^{n/2}}$, we can see
    that if $m \leqslant \frac{1}{4\eps} 2^{\frac{n}{2}} $, then
    $R_{\chi^2} (m) \geqslant 2 \varepsilon - \frac{4 \varepsilon^2}{2^{n / 2}} m
    \geqslant \varepsilon$.
  \end{proof}

\section{Testing maximum in-degree of Bayes nets}\label{sec:degree_tester}

\begin{algorithm2e}[htp]
  \SetKwComment{Comment}{/* }{ */}\small
  \SetKwInOut{Input}{Input}
  \DontPrintSemicolon
  \Input{Sample access to distribution $P$, accuracy parameter $\varepsilon$ and a degree-$d$ DAG $G$.}
  Learn $P$  with $\varepsilon$, $G$ and $\frac{2^{d} n^2 \log (2^d n)}{\varepsilon^2}$ samples via \cref{algo:near_proper_learning_chi_square}: obtaining an estimate $\tilde{Q}$, and an $O(\varepsilon^2)$-effective support set $\mathcal{A}$ via \cref{algo:majority_set_learning}.\;
  Draw a multiset $S$ of $\tmop{Poisson}(m)$ samples from $P$, where $m=\frac{2^{n/2}}{\varepsilon^2}$.\;
  Call \cite[Algorithm 1]{DBLP:conf/soda/DaskalakisKW18} and return  $P_{\mathcal{A}}$, $\tilde{Q}_{\mathcal{A}}$, $S$, $\varepsilon$.\;
  \caption{Testing $P$ is a degree-$d$ DAG $G$\label{alg:test-degree-d-DAG}}
\end{algorithm2e}

\begin{thm}
  \label{theorem:graph-test-general-distribution} Given an unknown distribution
  $P$, and a maximum degree-$d$ graph $G$ supported on $\{0, 1\}^n$, it takes
  at most $O \left( \max \left( \frac{2^{n / 2}}{\varepsilon^2}, \frac{2^d n^2 d
  \log (n)}{\varepsilon^2} \right) \right)$ i.i.d. samples to test whether
  $d_H  (P, G) = 0 \infixor d_H  (P, G) \geqslant \varepsilon$, with 
  probability $\geqslant 2 / 3$.
  
  Furthermore, testing whether $P$ is Markov w.r.t. any max degree-$d$
  graphs with success probability at least $2 / 3$, takes at most $O \left(
  \max \left( \frac{2^{n / 2}}{\varepsilon^2}, \frac{2^d n^2 d \log
  (n)}{\varepsilon^2} \right) \cdot \log (n^{dn}) \right)$ samples.
\end{thm}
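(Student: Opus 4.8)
The plan is to analyze \cref{alg:test-degree-d-DAG} for the first, fixed-graph claim, and then to obtain the second (any-graph) claim by running it over all degree-$d$ DAGs together with a union bound. For a fixed $G$: first I would run the learning stage, \cref{algo:near_proper_learning_chi_square} (which itself calls \cref{algo:majority_set_learning}), and invoke \cref{theorem:near_proper_chi_squared_learning_bayes} with a small enough constant in place of its ``$O(\varepsilon^2)$'' (rescaling $\varepsilon$ by a constant costs only a constant factor in samples). Two of its guarantees hold for \emph{every} $P$: the output $\tilde Q$ is a genuine degree-$d$ Bayes net with respect to $G$ satisfying $\tilde Q(\mathcal A)=1$, and $P(\mathcal A)\ge 1-O(\varepsilon^2)$ --- indeed \cref{algo:majority_set_learning} only discards parent configurations of small empirical mass, and there are at most $n\cdot 2^{d+1}$ of them, each carrying mass $O(\varepsilon^2/(2^d n))$, so at most $O(\varepsilon^2)$ mass is dropped. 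The third guarantee, $d_{\chi^2}(P,\tilde Q,\mathcal A)\le O(\varepsilon^2)$, additionally requires that $P$ itself be Markov with respect to $G$. \cref{alg:test-degree-d-DAG} then draws $\mathrm{Poisson}(m)$ samples from $P$ with $m=2^{n/2}/\varepsilon^2$, keeps those landing in $\mathcal A$ (by Poissonization these form a $\mathrm{Poisson}(m')$-size i.i.d.\ sample from $P_{\mathcal A}$ with $m'\ge m/2$, since $P(\mathcal A)\ge 1/2$), and feeds $\tilde Q_{\mathcal A}=\tilde Q$ together with these samples to the $\chi^2$-versus-$d_H$ tolerant tester of \cite[Algorithm~1]{DBLP:conf/soda/DaskalakisKW18}, which on a domain of size $|\mathcal A|\le 2^n$ uses $O(2^{n/2}/\varepsilon^2)$ samples and, with constant probability, accepts when $d_{\chi^2}(P_{\mathcal A},\tilde Q_{\mathcal A})\le O(\varepsilon^2)$ and rejects when $d_H(P_{\mathcal A},\tilde Q_{\mathcal A})\ge\varepsilon/2$.

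For completeness ($d_H(P,G)=0$): $P$ is Markov with respect to $G$, so $d_{\chi^2}(P,\tilde Q,\mathcal A)\le O(\varepsilon^2)$; since $\tilde Q_{\mathcal A}=\tilde Q$ and $P(\mathcal A)\ge 1-O(\varepsilon^2)$, expanding $(P(x)/P(\mathcal A)-\tilde Q(x))^2$ shows $d_{\chi^2}(P_{\mathcal A},\tilde Q_{\mathcal A})\le 2\,d_{\chi^2}(P,\tilde Q,\mathcal A)+O(\varepsilon^4)=O(\varepsilon^2)$, so the tolerant tester accepts. For soundness ($d_H(P,G)\ge\varepsilon$): $\tilde Q$ is itself Markov with respect to $G$, hence $d_H(P,\tilde Q)\ge\varepsilon$; combined with $d_H(P,P_{\mathcal A})\le\sqrt{1-P(\mathcal A)}=O(\varepsilon)$ --- which I force to be $\le\varepsilon/2$ by taking the effective-support constant $c$ small enough --- and $\tilde Q_{\mathcal A}=\tilde Q$, the triangle inequality for $d_H$ gives $d_H(P_{\mathcal A},\tilde Q_{\mathcal A})\ge\varepsilon/2$, so the tolerant tester rejects. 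Composing the constant failure probabilities of support identification (\cref{lemma:tilde_S_k_algo_analysis}), near-proper learning, and the tolerant tester --- each run at a slightly smaller target failure probability so their sum stays below $1/3$ --- gives success probability $\ge 2/3$; the sample count is $O(2^d n^2\log(2^d n)/\varepsilon^2)$ for learning plus $O(2^{n/2}/\varepsilon^2)$ for testing, i.e., $O(\max(2^{n/2}/\varepsilon^2,\,2^d n^2 d\log n/\varepsilon^2))$ using $\log(2^d n)=O(d\log n)$.

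For the any-graph statement, the number of labelled DAGs on $[n]$ with in-degree at most $d$ is at most $\binom{n-1}{\le d}^{\,n}\le n^{dn}$; enumerate them as $G_1,\dots,G_M$ with $\log M=\Theta(dn\log n)$. Draw $t=\Theta(\log M)$ independent batches of samples, each of size the fixed-graph bound above, and for each $G_j$ output the majority vote of the $t$ runs of \cref{alg:test-degree-d-DAG} on $G_j$, \emph{reusing the same $t$ batches across all $M$ graphs}; a Chernoff bound lowers the per-graph failure probability to $\le 1/(3M)$. Accept iff the amplified tester accepts for at least one $G_j$. If $P$ is Markov with respect to some degree-$d$ DAG $G_{j^\star}$, then $d_H(P,G_{j^\star})=0$ and its amplified tester accepts with probability $\ge 1-1/(3M)$; if $P$ is $\varepsilon$-far in $d_H$ from every degree-$d$ Bayes net, then $d_H(P,G_j)\ge\varepsilon$ for all $j$, so a union bound over $j\in[M]$ shows all amplified testers reject with probability $\ge 1-M\cdot\frac{1}{3M}=\frac23$, and we reject. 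The total number of samples is $t$ times the fixed-graph bound, i.e., $O(\max(2^{n/2}/\varepsilon^2,\,2^d n^2 d\log n/\varepsilon^2)\cdot\log(n^{dn}))$.

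\textbf{The main obstacle} is the soundness direction of the fixed-graph case: the $O(\varepsilon^2)$ mass discarded to form the effective support $\mathcal A$ makes $d_H(P,P_{\mathcal A})$ only $O(\varepsilon)$ rather than $o(\varepsilon)$, so it could a priori cancel the entire $\varepsilon$-Hellinger witness of $d_H(P,G)\ge\varepsilon$. This is precisely why \cref{theorem:near_proper_chi_squared_learning_bayes} is engineered to output $\tilde Q$ with $\tilde Q(\mathcal A)=1$ \emph{exactly} (the mass-shifting step \eqref{eq:mass_shifting_Hellinger}), so no further error is introduced on the $\tilde Q$ side, and why the threshold constant $c$ must be taken small enough. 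The only subtlety in the any-graph reduction is the observation that the union bound over the exponentially many DAGs is valid even though they are all tested on the same amplified sample batches, so no independence across graphs is required.
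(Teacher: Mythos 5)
Your proposal is correct and follows the same testing-by-learning structure as the paper's proof: near-properly learn a degree-$d$ Bayes net $\tilde Q$ and effective support $\mathcal A$ via \cref{theorem:near_proper_chi_squared_learning_bayes}, run the $\chi^2$-versus-Hellinger tolerant tester of \cite{DBLP:conf/soda/DaskalakisKW18} restricted to $\mathcal A$, and amplify plus union-bound over all $n^{O(dn)}$ degree-$d$ DAGs for the second claim. The one small deviation is in the soundness step, where you use the Hellinger triangle inequality through $P_{\mathcal A}$ while the paper instead uses the additive split $d_H^2(P,\tilde Q)=d_H^2(P_{\mathcal A},\tilde Q_{\mathcal A})+d_H^2(P_{\bar{\mathcal A}},\tilde Q_{\bar{\mathcal A}})$ and bounds the second term by TV mass; both arguments rest on the same quantitative fact ($P(\mathcal A),\tilde Q(\mathcal A)\ge 1-O(\varepsilon^2)$ with a small enough constant) and on the same observation, which you make explicit and the paper leaves implicit, that $\tilde Q$ being a bona-fide degree-$d$ Bayes net on $G$ forces $d_H(P,\tilde Q)\geq\varepsilon$ in the far case.
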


\begin{proof}
  We prove by analyzing \cref{alg:test-degree-d-DAG}. By the guarantee of the underlying tester in \cite[Algorithm 1]{DBLP:conf/soda/DaskalakisKW18}, it suffices to verify the following:
  \begin{itemizedot}
    \item {\tmem{Soundness:}} If $d_H  (P, G) \geqslant \varepsilon$ (it is far from any Bayes nets of graph $G$), then $d_H  (P_{\mathcal{A}}, \Tilde{Q}_{\mathcal{A}}) \geqslant \Omega(\varepsilon$);
    
    \item {\tmem{Correctness:}} If $d_H  (P, G) = 0$, then $d_{\chi^2}  (P_{\mathcal{A}}, \tilde{Q}_{\mathcal{A}}) \leqslant O (\varepsilon^2)$; and we have this from \autoref{theorem:near_proper_chi_squared_learning_bayes}.
  \end{itemizedot}
  Roughly speaking, we can pretend $P$ and $\tilde{Q}$ are supported only on $\mathcal{A}$, and since $|\mathcal{A}| \leqslant 2^{n}$, $O(\sqrt{2^{n}})/\varepsilon^2$ samples suffice for testing.
  For soundness, by \autoref{theorem:near_proper_chi_squared_learning_bayes}, we have
  \[\tilde{Q} (\tilde{S}) \geqslant 1 - O (\varepsilon^2),
  \infixand P (\tilde{S}) \geqslant 1 - O (\varepsilon^2).\] Since
  \[d_H^2 (P, Q) = d_H^2 (P_{\mathcal{A}}, Q_{\mathcal{A}}) + d_H^2
        (P_{\bar{\mathcal{A}}}, Q_{\bar{\mathcal{A}}}), \infixand d_H^2
        (P_{\bar{\mathcal{A}}}, Q_{\bar{\mathcal{A}}}) \leqslant d_{\tmop{TV}}
        (P_{\bar{\mathcal{A}}}, Q_{\bar{\mathcal{A}}}) \leqslant \frac{1}{2}  (P
        (\bar{\mathcal{A}}) + Q (\bar{\mathcal{A}})) \leqslant O (\varepsilon^2),\] we
        have that $d_H^2 (P_{\mathcal{A}}, Q_{\mathcal{A}}) \geqslant \Omega
        (\varepsilon^2)$.
    
  Since it costs an extra $O (\log (1 / \delta))$ to amplify the success
  probability to $1 - \delta$ for each test, we will run amplified accurate
  tests on all $n^{O (dn)}$ possible maximum in-degree-$d$ graphs and follow
  up with a union bound of $1 - \delta \cdot n^{O (dn)}$. In particular, we
  set $\delta = \frac{1}{n^{dn}}$, which brings an additional $O (\log
  (n^{dn}))$ factor to the overall sample complexity, and thus, it gives us a
  tester for maximum in-degree-$d$ graphs with sample complexity $O \left(
  \max \left( \frac{2^{n / 2}}{\varepsilon^2}, \frac{2^d n^2d \log
  (n)}{\varepsilon^2} \right) \cdot \log (n^{dn}) \right)$.
\end{proof}

\subsection{Extending \autoref{theorem:graph-test-general-distribution} to TV distance}\label{section:TV_distance_extension}
  While our result in \autoref{theorem:graph-test-general-distribution}
  already implies a tester in $d_{\tmop{TV}}$, with our near-proper learner in
  $d_{\chi^2}$ for bounded degree Bayes net, it also implies a similar
  graphical tester in $d_{\tmop{TV}}$ analogous to
  \autoref{theorem:graph-test-general-distribution}, where the shifting of masses
  is unnecessary (see {\cite[Remark 1]{DBLP:conf/nips/AcharyaDK15}}),
  i.e., the additional requirement of $Q (\tilde{S}) \geqslant 1 - O
  (\varepsilon^2)$ is no longer necessary in the case of $\tmop{TV}$; and we can also weaken requirement on $P(\tilde{S})$: $P(\Tilde{S}) \geqslant 1 - O(\varepsilon)$.

  To see this, we only need to verify that $d_{\tmop{TV}} (P_\mathcal{A}, Q_{\mathcal{A}}) \geqslant \Omega (\varepsilon)$ in the case of soundness.
  Assuming that $d_{\tmop{TV}} (P, Q) > 10 \varepsilon$ and $P (S) > 1 -
  \varepsilon$, we analyze the two cases,
  \begin{itemizeminus}
    \item When $Q (S) < 1 - 2 \varepsilon$, we have $P (S^c) \leqslant
    \varepsilon, Q (S^c) > 2 \varepsilon$, and thus
    \[\frac{1}{2}  \sum_{i \in S} |P_i - Q_i |
       \geqslant \frac{1}{2} \left| \sum_{i \in S} (P_i - Q_i) \right| = \frac{1}{2}  (P (S) - Q (S))
       > \frac{1}{2}  (1 - \varepsilon - (1 - 2
       \varepsilon)) = \frac{\varepsilon}{2} .\]
    \item When $Q (S) < 1 - 2 \varepsilon$, similarly,
    \[\frac{1}{2}  \sum_{i \in S} |P_i - Q_i | =
        \frac{1}{2}  \sum_{i \in \Omega} |P_i - Q_i | - \frac{1}{2} \sum_{i\nin S} |P_i - Q_i | > \frac{1}{2}  (10 \varepsilon - (\varepsilon + 2 \varepsilon)) = \frac{7}{2} \varepsilon .\]
    \end{itemizeminus}
  In both cases, we have $d_{\tmop{TV}} (P_S, Q_S) %
  \geqslant \Omega
  (\varepsilon)$. Nevertheless, we note that the same technique does not work for Hellinger, and thus requires a slightly stronger guarantee.

\section{Conclusion and future directions}
In this paper, we provided (nearly) tight sample complexity bounds for testing the maximum in-degree of an unknown Bayesian network. Along the way, we obtained several results of independent interest, including a near-proper learner for Bayesian networks under $\chi^2$ divergence, and a high-probability $\chi^2$ learning algorithm (for arbitrary discrete distributions).

Our results raise two interesting future directions. The first is to generalize our testing result to the more general question of maximum degree-$d$ testing \emph{under maximum degree-$k$ assumption}, where $k > d$ are both input parameters; in particular, our results correspond to $k=n$. The second is to either strengthen our high-probability $\chi^2$ learning bound to obtain an \emph{additive} $\log(1/\delta)$ dependence (as is known for total variation distance learning), instead of a multiplicative one; or to show that such a multiplicative dependence on $\log(1/\delta)$ is necessary. We note that such a result is not known even for the weaker KL divergence learning.

\section*{Acknowledgment}
Yang would like to thank Philips George John for the helpful discussions, and for suggesting the $\delta$ function notation in the lower bound analysis.

\bibliographystyle{alpha}
\bibliography{biblio}

\end{document}